\theoremstyle{plain}
\newtheorem{theorem}{Theorem}
\newtheorem{lemma}[theorem]{Lemma}
\newtheorem*{theorem*}{Theorem}
\newtheorem*{lemma*}{Lemma}
\newtheorem*{corollary*}{Corollary}
\newtheorem*{proposition*}{Proposition}
\newtheorem*{claim*}{Claim}
\newtheorem*{fact*}{Fact}
\theoremstyle{definition}
\newtheorem*{definition*}{Definition}
\newtheorem*{remark*}{Remark}
\newtheorem*{example*}{Example}
\newcommand{\ignore}[1]{}
\DeclareMathOperator*{\argmin}{arg\,min}
\newcommand{\be}{\begin{align}}
\newcommand{\en}{\end{align}}
\newcommand{\ben}{\begin{align*}}
\newcommand{\enn}{\end{align*}}
\newcommand{\norm}[1]{\left\|#1\right\|_2}
\newcommand{\reals}{\mathbb{R}}
\def\reals{{\mathcal R}}
\newcommand{\D}{{\mathcal{D}}}
\def\reals{{\mathbb R}}
\def\bold0{\mathbf{0}}
\def\be{\mathbf{e}}
\newcommand{\eps}{\varepsilon}
\newcommand{\tO}{\wt{\O}}
\newcommand{\regret}{\text{Regret}}
\newcommand{\X}{\mathcal{X}}
\newcommand{\tf}{\tilde{f}}
\newcommand{\F}{\mathcal{F}}
\newcommand{\tell}{\tilde{\ell}}
\newcommand{\sumin}{\sum_{i=1}^n}
\newcommand{\sumtt}{\sum_{t=1}^T}
\newcommand{\expval}[1]{\mathbb{E}\left[ #1 \right]}
\newcommand{\Var}{\text{Var}}
\newenvironment{proofarg}[1]{%
  \proof}{\endproof}
\newcommand{\startfoo}{%
    \par\medskip
    \begin{mdframed}[linewidth=1pt]%
    \let\figure\figurehere
    \let\endfigure\endfigurehere
    \ignorespaces
}
\newcommand{\stopfoo}{%
    \unskip
    \end{mdframed}%
    \par\medskip
}
\renewcommand{\tO}{\tilde{\mathcal{O}}}
\icmltitlerunning{Online Variance Reduction with Mixtures}
\begin{document}

\twocolumn[
\icmltitle{Online Variance Reduction with Mixtures}

\icmlsetsymbol{equal}{*}

\begin{icmlauthorlist}
\icmlauthor{Zalán Borsos}{eth}
\icmlauthor{Sebastian Curi}{eth}
\icmlauthor{Kfir Y. Levy}{eth}
\icmlauthor{Andreas Krause}{eth}
\end{icmlauthorlist}

\icmlaffiliation{eth}{Department of Computer Science, ETH Zurich}
\icmlcorrespondingauthor{Zalán Borsos}{zalan.borsos@inf.ethz.ch}

\vskip 0.3in
]

\printAffiliationsAndNotice{}  %

\begin{abstract}
Adaptive importance sampling for stochastic optimization is a promising approach that offers improved convergence through variance reduction. In this work, we propose a new framework for variance reduction that enables the use of mixtures over predefined sampling distributions, which can naturally encode prior knowledge about the data. While these sampling distributions are fixed, the mixture weights are adapted during the optimization process. We propose VRM, a novel and efficient adaptive scheme that asymptotically recovers the best mixture weights in hindsight and can also accommodate sampling distributions over sets of points. We empirically demonstrate the versatility of VRM in a range of applications.
\end{abstract}

\section{Introduction}
In the framework of Empirical Risk Minimization (ERM), we are provided with a set of samples $\D=\{x_1,\ldots,x_n\}\subset \X$ drawn from the underlying data distribution, and our goal is to minimize the empirical risk.
Sequential ERM solvers (e.g. SGD, SVRG, etc.) proceed in multiple passes over the dataset and usually require an unbiased estimate of the loss in each round of the optimization. Typically, the estimate is generated by sampling uniformly from the dataset, which is oblivious to the fact that different points can affect the optimization differently. This ignorance can hinder the performance of the optimizer due to the high variance of the obtained estimates.

A promising direction that has recently received  increased interest is represented by (adaptive) importance sampling techniques. Clever sampling distributions can account for  characteristics of datapoints relevant to the optimization in order to improve the performance (see e.g., \cite{zhao2015stochastic,pmlr-v70-namkoong17a, pmlr-v80-katharopoulos18a}).
\paragraph{Why mixtures?} The majority of existing works on   adaptive sampling distributions are \emph{unable to exploit similarities} between points. Thus, only after \emph{several passes over the dataset} do these methods become effective.
This  can become a  major bottleneck for large datasets.

Fortunately, in many situations, it is possible to exploit the structure present in data in some form of a prior. One way of capturing prior knowledge in the framework of importance sampling for variance reduction  is  to propose  plausible fixed sampling distributions before performing the optimization. This is very natural for problems where \emph{similar objects} can be \emph{grouped together}, e.g., based on the class label or clustering in feature space.
In such cases it is  sensible to employ  standard sampling distributions that  draw similar members with the same probability, e.g., as considered by  \citet{zhao2014accelerating}.
Another option is to employ sampling distributions that encourage  diverse sets of samples, e.g., Determinantal Point Processes
\cite{kulesza2012determinantal}.

Suppose that several such proposals for sampling distributions are available. A natural idea is to combine them into a mixture and \emph{adapt the mixture weights} during the optimization process, in order to achieve variance reduction. This setup has the potential of being much more efficient than learning an arbitrary sampling distribution over individual points, provided that one can propose plausible sampling distributions prior to the optimization. Another advantage of this setting is that it enables to \emph{efficiently}  handle distributions not only on individual points, but also on \emph{sets of points}. 
While in principle this can still be treated using existing approaches,  their computational complexity in this case will increase proportionally to the number of possible sets. The latter  often grows \emph{exponentially}  with the size of the sets.

In this work, we develop an online learning approach to variance reduction and ask the following question: given $k$ fixed sampling distributions, how can we choose the mixture weights in order to achieve the largest reduction in the variance of the estimates? We provide a simple yet efficient algorithm for doing so. As for our main contributions, we:
\vspace{-3mm}
\begin{itemize}
\setlength\itemsep{0.08em}
\item formulate the task of adaptive importance sampling for variance reduction with mixtures as an online learning problem,
\item propose a novel algorithm for this setting  with sublinear regret of $\tilde{\mathcal{O}}(T^{4/5})$,
\item substantiate our findings experimentally with accompanying efficient implementation.
\end{itemize}

\textbf{Related Work:}~
There is a large body of work on employing importance sampling distributions for  variance reduction.
 Prior knowledge of gradient norm bounds on each datapoint has been utilized for fixed importance sampling by \citet{needell2014stochastic} and  \citet{zhao2015stochastic}. Adaptive strategies were presented by  \citet{bouchard2015online}, who propose parametric importance sampling distributions  where the parameters of the distributions are updated during the course of the optimization. \citet{NIPS2017_7025} derive a safe adaptive sampling scheme that is guaranteed to outperform any fixed sampling strategy. A significant body of work is concerned with non-uniform sampling of coordinates  in coordinate descent \citep{allen2016even,perekrestenko2017faster, NIPS2018_8137}. All the works presented above provide importance sampling schemes over points or coordinates. Sampling over sets of points and exploiting similarities  between points in these works remains an open question.

Importance sampling has found applications in optimizing deep neural networks. \citet{NIPS2018_7957} and \citet{pmlr-v80-katharopoulos18a} propose methods for choosing the importance sampling distributions over points  proportional to their corresponding approximate gradient norm bounds. 
\citet{NIPS2018_7957} also propose to adapt the learning rate based on the gains in gradient norm reductions. 
 \citet{loshchilov2015online}  propose sampling based on the latest known loss value with exponentially decaying selection probability on the rank. In the context of reinforcement learning, \citet{Schaul2016} suggest a smoothed importance sampling scheme of experiences present in the replay buffer,  based on the last observed TD-error.

Most closely related to our setting, importance sampling for variance reduction has been considered through the lens of online learning in the recent works of \citet{pmlr-v70-namkoong17a}, \citet{salehi2017} and \citet{pmlr-v75-borsos18a}. These works pose the ERM solver as an \emph{adversary} responsible for generating the losses.  The goal of the learner (player) is to minimize the cumulative variance by
 choosing the sampling distributions  adaptively based on partial feedback.
 However, similarly to existing work on adaptive sampling, these methods are not designed for exploiting similarities between points.

\section{Problem Setup} \label{sec:setup}

In the framework of ERM, the goal is to minimize the empirical risk,
\begin{equation*} \label{eq:ERM}
  \min_{\theta\in\Theta }\mathcal{L}(\theta) = \min_{\theta\in\Theta } \frac{1}{n}  \sum_{i=1}^n \ell (x_i, \theta),
\end{equation*}
where $\ell: \mathcal{X} \times \Theta \mapsto \reals_{\geq 0}$ is the loss function, and $\Theta\subseteq \reals^d$ is a compact domain.
A typical sequential ERM solver will run over $T$ rounds and update the parameters based on an \emph{unbiased estimate} $\tilde{\mathcal{L}}_t$ of the empirical loss in each round $t\in [T]$. A common approach for producing  $\tilde{\mathcal{L}}_t$ is to sample a point $i_t \in \{ 1,...,n\}$ uniformly at random, thus ignoring the underlying structure of the data. However, using importance sampling, we can produce these estimates by sampling with \emph{any} distribution $q \in \Delta_n$, where $\Delta_n$ is the $n$-dimensional probability simplex, provided that we compensate for the bias through importance weights.

Suppose we are provided with $k$ sampling distributions $p_1, ..., p_k \in \Delta_n$. We combine these distributions into a \emph{mixture}, in which the probability of sampling $x_i$ is given by $ w^\intercal p(i)$, where $w\in \Delta_k$ is the mixture weight vector and $p(i):= [p_1(i), ..., p_k(i)]$. Using the mixture, we obtain the loss estimate
\begin{equation*}
\tilde{\mathcal{L}}_t(\theta)= r_i \cdot \ell (x_i, \theta),
\end{equation*}
where $r_i=\frac{1}{n \cdot w^\intercal p(i)}$ is the importance weight of point $i$.

The performance of solvers such as SGD, SAGA \citep{defazio2014saga} and SVRG \citep{johnson2013accelerating} is known to improve when the variance of $\tilde{\mathcal{L}}_t$ is smaller. Thus, a  natural performance measure for our mixture sampling distribution is the \emph{cumulative variance} of  $\tilde{L}_t$ through the $T$ rounds of optimization,
\begin{equation*}
\sumtt\text{Var}_q(\tilde{\mathcal{L}}(\theta_t)) = \frac{1}{n^2} \sumtt \sum_{i=1}^n \frac{\ell^2 (x_i, \theta_t)}{w^\intercal p(i)} - \sumtt \mathcal{L}^2(\theta_t).
\end{equation*}
Since only the cumulative second moments depend on $w$, we define our cost function at time $t$ as $\frac{1}{n^2}f_t(w)$, where
\begin{equation*}
f_t(w) = \sumin \frac{\ell_t^2(i)}{w^\intercal p(i)},
\end{equation*}
and where  we have introduced the shorthand $\ell_t(i):=\ell(x_i, \theta_t)$.
Through the lens of online learning, it is natural to regard the sequential solver as an \emph{adversary} responsible for generating the losses $\{ \ell_t \}_{t \in [T]}$ and to measure the
performance using the notion of the \emph{cumulative regret},
\begin{equation*}
\regret_T = \frac{1}{n^2} \left( \sumtt f_t(w_t) -   \min _{w \in \Delta_k} \sumtt  f_t(w) \right).
\end{equation*}
By devising a no-regret algorithm, we are guaranteed to compete asymptotically with the best mixture weights in hindsight. The online variance reduction with mixtures (OVRM) protocol is presented in Figure  \ref{alg:OVRM}. 

Motivated by empirical insights, we impose a natural mild restriction on our setting, which is easily verified  in practice:
\paragraph{Assumption 1}Throughout the work, we assume that  the losses are bounded, $\ell_t^2(i) \leq L$ for all $t \in [T]$, $i \in [n]$ and  that all mixture components  place a probability mass at most $p_{\max} = \frac{c}{n}$  on any specific point, where $c \in [1,n]$. That is, $p_j(i) \leq \frac{c}{n}$, for all $i \in [n], j \in [k]$.

The choice of $c$ is in the hands of the designer who determines the fixed sampling distributions in the mixture. Although $c$ can be as large as $n$, in our experiments, we show how we can obtain a large speedup in the optimization due to the reduced variance  by using mixtures with small values of $c$ (the maximal value of $c$ is less than 50 in the experiments). We note that our setup also allows choosing $k=n$, as many mixture components as points, where a mixture puts all its probability mass on a specific point, i.e. $p_j(i) = \delta_{ij}$ for $i \in [n], j \in [k]$ and consequently $c=n$. Under this perspective, our setting is a \emph{strict generalization} of adaptively choosing sampling distributions over the points, which is the main objective of  \citet{salehi2017}, \citet{pmlr-v70-namkoong17a} and \citet{pmlr-v75-borsos18a}. However, in practical scenarios, $k$ is usually small due to the limited number of available proposal distributions. 

\begin{figure}[h]
\begin{framed}
\centering{ \textbf{OVRM Protocol}\\}
 \flushleft
 \textbf{Input}: Dataset $\mathcal{D} = \{ x_1, ..., x_n\}$, sampling distributions $p=[p_1,...,p_k]$. \\
   \textbf{for} $t=1,\ldots, T$ \textbf{do} \\
        \quad player chooses $w_t \in \Delta_k $ \\
        \quad adversary chooses  $\ell_t \in \mathbb{R}^n$  \\
        \quad player draws $I_t \sim w_t^\intercal p$\\
        \quad player incurs cost $f_t(w_t)/n^2$ and receives $\ell_t(I_t)$ as \\
        \qquad partial  feedback \\
   \textbf{end for} \\
\end{framed}
   \vspace{-2mm}
\caption{Online variance reduction protocol with mixtures and partial feedback.}
\label{alg:OVRM}
\end{figure}

We have formulated our objective as minimizing the cumulative second moment of the loss estimates. If we choose to substitute $\ell_t(i)$ with $\lVert \nabla \ell (x_i, \theta_t) \rVert $, the norm of the loss gradients, the corresponding cumulative second moment has a stark relationship to the quality of optimization --- for example, this quantity directly appears in the regret bounds of AdaGrad \citep{duchi2011adaptive}. For a more detailed discussion see \citet{pmlr-v75-borsos18a}.

Let us discuss some properties of our setting. Since $f_1,...,f_T$ are convex functions on $\Delta_k$, the problem is an instance of online convex optimization (OCO). While the OCO framework offers a wide range of well-understood tools, our biggest challenge here is posed by the fact that the cost functions are \emph{unbounded}, together with the fact that we have \emph{partial feedback}. The majority of existing regret analyses assume boundedness of the cost functions. 

For simplicity, we focus on choosing \emph{datapoints}; nevertheless, our method applies to choosing coordinates or blocks of \emph{coordinates} in coordinate descent and  can work on top of any sequential solver that builds on unbiased loss estimates. As we will see in Section \ref{sec:partial_info}, the complexity and the  performance guarantee of our algorithm is \emph{independent of $n$}, which broadens its applicability significantly. For example, instead of learning mixtures of distributions over points, we can learn a mixture for variance-reduced sampling of  \emph{minibatches}, where each mixture component is a fixed $k$-Determinantal Point Process \citep{kulesza2012determinantal}.

\section{Full Information Setting}
Let us assume for the moment that in each round of Protocol \ref{alg:OVRM}, the player receives full information feedback, i.e., sees the losses associated to \emph{all} points $[\ell_t(1), ...,\ell_t(n)]$ instead of observing only the loss $\ell_t(I_t)$ associated with  the chosen point. This setup, referred to as full information setting, is unrealistic, yet it serves as the main tool for the analysis of the partial information setting, which we discuss in Section \ref{sec:partial_info}.
Here, we first show an efficient algorithm for the full information setting (Alg.~\ref{alg:ons}), ensuring  a regret bound of
$\tO(k^{1/2}T^{2/3})$.

Unfortunately, even under Assumption 1, our cost function can be unbounded.
In order to tackle this, we consider that the last mixture component (the $k$-th one) is  always the uniform distribution. If this is not the case in practice, we can simply attach the uniform distribution to the given sampling distributions. This is w.l.o.g.,  since the optimal $w$ in hindsight is allowed to assign 0 weight on any component. Thus, we have that  $p(i) = [p_1(i),..., p_{k-1}(i), 1/n]$ for all $i \in [n]$. For the analysis, we consider the restricted simplex $\Delta'_k=\{ w \in \Delta_k | \, w(k) \geq \gamma \}$,  where the last weight corresponding to the uniform component is larger  than some $\gamma \in (0, 1]$. This allows for decomposing the regret as follows:
\begin{align}\label{eq:regret-decomp}
\regret_T &= \underbrace{ \frac{1}{n^2} \left(\sumtt f_t(w_t) -   \min_{w \in \Delta'_k}\sumtt  f_t(w) \right) }_{(A)} \\
&+  \underbrace{ \frac{1}{n^2}   \left( \min_{w \in \Delta'_k}\sumtt  f_t(w) - \min_{w \in \Delta_k}\sumtt  f_t(w) \right)  }_{(B)}. \nonumber
\end{align}

This decomposition introduces a trade-off. By choosing larger $\gamma$,  we pay more in term $(B)$ for potentially missing the optimal $w$. Nevertheless, larger $\gamma$ makes the cost function ``nicer'':  not only does it reduce the upper bounds on the costs, but it also turns $f_t$ into an \emph{exp-concave} function, as we will later show.

First, let us focus on bounding $(B)$, which captures the excess regret of working in $\Delta'_k$ instead of $\Delta_k$.
\begin{lemma} \label{lemma:b-bound} The reduction to the restricted simplex $\Delta'_k$ incurs the excess regret of
\begin{equation*}
(B)  \leq \gamma LT.
\end{equation*}
\end{lemma}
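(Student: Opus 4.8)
The plan is to exhibit an explicit feasible point $w' \in \Delta'_k$ that is only slightly worse than the unrestricted minimizer, and then bound $(B)$ by the loss gap at that single point. Let $w^\star \in \arg\min_{w \in \Delta_k}\sum_t f_t(w)$. If $w^\star(k) \geq \gamma$ we are done, since then $w^\star \in \Delta'_k$ and $(B) \le 0$; otherwise, push $w^\star$ toward the uniform component by setting $w' := (1-\gamma)\,w^\star + \gamma\, e_k$, where $e_k$ is the $k$-th vertex of the simplex. Because the $k$-th mixture component is the uniform distribution, $e_k^\intercal p(i) = 1/n$ for every $i$, and $w'(k) = (1-\gamma)w^\star(k) + \gamma \geq \gamma$, so $w' \in \Delta'_k$. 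Hence $(B) \leq \tfrac{1}{n^2}\big(\sum_t f_t(w') - \sum_t f_t(w^\star)\big)$, and it suffices to bound the right-hand side by $\gamma L T$.

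Next I would compare $f_t(w')$ and $f_t(w^\star)$ term by term. Writing $a_i := (w^\star)^\intercal p(i)$ and $b_i := (w')^\intercal p(i) = (1-\gamma)a_i + \gamma/n$, one has the identity
\[
f_t(w') - f_t(w^\star) = \sum_{i=1}^n \ell_t^2(i)\left(\frac{1}{b_i} - \frac{1}{a_i}\right) = \gamma \sum_{i=1}^n \ell_t^2(i)\,\frac{a_i - 1/n}{a_i\, b_i}.
\]
The key observation is that the $i$-th summand is nonpositive whenever $a_i \leq 1/n$, so those indices only help and may be dropped. For the remaining indices, $a_i > 1/n$ forces $b_i = (1-\gamma)a_i + \gamma/n > (1-\gamma)/n + \gamma/n = 1/n$, and moreover $(a_i - 1/n)/a_i \leq 1$; therefore each retained summand is at most $\gamma\,\ell_t^2(i)/b_i \leq \gamma n\,\ell_t^2(i)$. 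Summing over the (at most $n$) retained indices and invoking Assumption~1 gives $f_t(w') - f_t(w^\star) \leq \gamma n \sum_{i=1}^n \ell_t^2(i) \leq \gamma n^2 L$.

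Finally, summing over $t \in [T]$ yields $\sum_t f_t(w') - \sum_t f_t(w^\star) \leq \gamma n^2 L T$, and dividing by $n^2$ gives $(B) \leq \gamma L T$. There is no deep obstacle here; the one point that needs care is the sign argument that discards the indices with $a_i \le 1/n$ — this is exactly what guarantees $b_i \ge 1/n$ on the terms that remain, and hence avoids the spurious factor $1/(1-\gamma)$ that the cruder estimate $f_t(w') \le f_t(w^\star)/(1-\gamma)$ would produce.
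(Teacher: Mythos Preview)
Your proof is correct and follows essentially the same argument as the paper: the same feasible point $w'=(1-\gamma)w^\star+\gamma e_k$, the same identity $\frac{1}{b_i}-\frac{1}{a_i}=\gamma\frac{a_i-1/n}{a_i b_i}$, the same sign-based discarding of indices with $a_i\le 1/n$, and the same use of $b_i\ge 1/n$ on the retained terms together with $\ell_t^2(i)\le L$ and $|\{i:a_i>1/n\}|\le n$ to conclude. The only cosmetic addition is your explicit treatment of the trivial case $w^\star(k)\ge\gamma$, which the paper omits since the general bound covers it anyway.
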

\begin{proof}
 Let $w_*=\arg \min_{w \in \Delta_k}\sumtt  f_t(w)$.  Let $w_*' = (1-\gamma)w_* + \gamma 
e_k$, where $e_k=[0,...,0,1]$. Now clearly $w_*' \in \Delta'_k$. We can observe that for all $i \in [n]$,
\begin{align*}
\frac{1}{w_*'^\intercal p(i) } - \frac{1}{w_*^\intercal p(i) } =  \frac{\gamma (w_* - e_k)^\intercal p(i)}{w_*'^\intercal p(i)  \cdot w_*^\intercal p(i)} 
 = \frac{\gamma \left(w_*^\intercal p(i) - \frac{1}{n}\right)}{w_*'^\intercal p(i)  \cdot w_*^\intercal p(i)}.
\end{align*}
If for some $i$ we have $w_*^\intercal p(i) - 1/n < 0$, or, equivalently $w_*^\intercal p(i) < 1/n$, we can ignore this specific term. Otherwise, if $w_*^\intercal p(i) \geq 1/n$, then also evidently  $w_*'^\intercal p(i) \geq 1/n$. Denote $\mathbb{I}_+$ the set of $i$'s for which $w_*^\intercal p(i) \geq 1/n$. Using the previous observations, we can now bound $(B)$:
\begin{align*}
n^2  \cdot  (B) &\leq  \sumtt \sum_{i \in \mathbb{I}_+ }\frac{\gamma \ell_t^2(i) \left(w_*^\intercal p(i) - \frac{1}{n}\right)}{w_*'^\intercal p(i)  \cdot w_*^\intercal p(i)}\\
& \leq \gamma L \sumtt \sum_{i \in \mathbb{I}_+ } \frac{w_*^\intercal p(i)}{w_*'^\intercal p(i)  \cdot w_*^\intercal p(i)} \leq n^2 \gamma L T,
\end{align*}
where the last inequality uses the fact that $w_*'^\intercal p(i) \geq 1/n$ for all $i \in \mathbb{I}_+$ and that $|\mathbb{I}_+| \leq n$. This proves the claim.
\end{proof}

By constraining our convex set to the  restricted simplex $\Delta_k'$, we achieve desirable properties of our cost function: $f_t$ and its gradient norm are bounded. The first natural option for solving the problem is Online Gradient Descent (OGD). However, OGD can only guarantee a $\mathcal{O}(\sqrt{T})$ bound on $(A)$ --- we elaborate on this in the supplementary material. We can obtain better regret bounds by noticing that  restricting the domain to $\Delta_k'$ has another advantage: it allows for exploiting curvature information as $f_t$ is \emph{exp-concave} on this domain.

A convex function $g: \mathcal{K} \mapsto \mathbb{R}$, where $\mathcal{K}$ is a convex set, is called $\alpha$-exp-concave, if $e^{-\alpha g(x)}$ is concave. Exp-concavity is a weaker property than strong convexity, but it can still be exploited to achieve logarithmic regret bounds \cite{hazan2006logarithmic}. In the following result, we establish the exp-concavity of our cost function on the restricted simplex $\Delta'_k$.

\begin{lemma}\label{lemma:exp-concave}
$f_t$ is $\frac{2\gamma}{n^2 L}$-exp-concave on $\Delta_k'$ for all $t\in [T]$.
\end{lemma}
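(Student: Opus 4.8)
The plan is to use the standard second-order characterization of exp-concavity: for a $C^2$ function $g$ on a convex set $\mathcal{K}$, the function $e^{-\alpha g}$ is concave on $\mathcal{K}$ if and only if $\nabla^2 g(w) \succeq \alpha\,\nabla g(w)\,\nabla g(w)^\intercal$ for all $w \in \mathcal{K}$. (This follows by differentiating twice, $\nabla^2 e^{-\alpha g} = \alpha e^{-\alpha g}\big(\alpha\,\nabla g\,\nabla g^\intercal - \nabla^2 g\big)$, which is negative semidefinite exactly under the stated matrix inequality.) I first note that $f_t$ is $C^2$ on $\Delta'_k$: each summand $w \mapsto \ell_t^2(i)/(w^\intercal p(i))$ is the reciprocal of a strictly positive affine function there, because $w^\intercal p(i) \ge w(k)\,p_k(i) = w(k)/n \ge \gamma/n > 0$, using that the $k$-th mixture component is uniform and that $w(k) \ge \gamma$ on $\Delta'_k$.

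Writing $s_i := w^\intercal p(i)$ and $a_i := \ell_t^2(i)$, a direct computation gives $\nabla f_t(w) = -\sum_i \tfrac{a_i}{s_i^2}\,p(i)$ and $\nabla^2 f_t(w) = \sum_i \tfrac{2a_i}{s_i^3}\,p(i)\,p(i)^\intercal$, so the inequality to establish, for every $v \in \reals^k$, is
\begin{equation*}
\Big(\sum_i \tfrac{a_i}{s_i^2}(p(i)^\intercal v)\Big)^2 \;\le\; \frac{1}{\alpha}\sum_i \tfrac{2a_i}{s_i^3}(p(i)^\intercal v)^2, \qquad \alpha = \frac{2\gamma}{n^2 L}.
\end{equation*}
I would derive this from the Cauchy--Schwarz inequality applied to the splitting $\tfrac{a_i}{s_i^2}(p(i)^\intercal v) = \big(\sqrt{\tfrac{2a_i}{s_i^3}}\,(p(i)^\intercal v)\big)\cdot\sqrt{\tfrac{a_i}{2 s_i}}$, which yields
\begin{equation*}
\Big(\sum_i \tfrac{a_i}{s_i^2}(p(i)^\intercal v)\Big)^2 \;\le\; \Big(\sum_i \tfrac{2a_i}{s_i^3}(p(i)^\intercal v)^2\Big)\Big(\sum_i \tfrac{a_i}{2 s_i}\Big).
\end{equation*}
It then remains only to bound the scalar factor: by Assumption 1, $a_i \le L$, and $s_i \ge \gamma/n$ from the estimate above, so each term satisfies $a_i/s_i \le nL/\gamma$ and, summing the $n$ of them, $\sum_i a_i/s_i \le n^2 L/\gamma$. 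Hence the inequality holds with $\alpha = 2/\big(\sum_i a_i/s_i\big) \ge 2\gamma/(n^2 L)$, and since a valid exp-concavity constant may always be decreased, $f_t$ is $\tfrac{2\gamma}{n^2 L}$-exp-concave on $\Delta'_k$.

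I do not expect a genuine obstacle; the one point requiring care is choosing the Cauchy--Schwarz split so that the powers of $s_i$ cancel correctly and the leftover factor is exactly $\sum_i a_i/s_i$ --- which is precisely the quantity controlled by restricting to $\Delta'_k$ through $s_i \ge \gamma/n$. This also clarifies why the restriction is needed: without a lower bound on the uniform weight, the exp-concavity constant degenerates to $0$.
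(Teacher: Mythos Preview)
Your proof is correct and follows essentially the same route as the paper's: both verify the second-order condition $\nabla^2 f_t \succeq \alpha\,\nabla f_t\,\nabla f_t^\intercal$ via a Cauchy--Schwarz/Jensen inequality together with the bounds $\ell_t^2(i)\le L$ and $w^\intercal p(i)\ge \gamma/n$ on $\Delta_k'$. The only cosmetic difference is that the paper applies the uniform-weight form $\big(\sum_i x_i\big)\big(\sum_i x_i\big)^\intercal \preceq n\sum_i x_i x_i^\intercal$ and then bounds $\ell_t^4\le L\,\ell_t^2$ and $s_i^{-1}\le n/\gamma$ in two separate steps, whereas your weighted Cauchy--Schwarz split isolates the scalar factor $\tfrac12\sum_i a_i/s_i=\tfrac12 f_t(w)$ directly and bounds it by $n^2L/(2\gamma)$ in one shot.
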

\begin{proofarg}{sketch}
In order to prove exp-concavity, we rely on the following result \cite{hazan2016introduction}:  a twice differentiable function $g$ is $\alpha$-exp-concave at $x$, iff
\begin{equation} \label{eq:exp-concavity}
\nabla^2 g(x) \succeq \alpha \nabla g(x) \nabla^\intercal g(x).
\end{equation}
In our case, $\mathcal{K} = \Delta'_k$ and $\nabla f_t(w) = -\sumin \frac{\ell_t^2(i)p(i)}{(w^\intercal p(i))^2}$ and $\nabla^2 f_t(w) = 2 \sumin \frac{\ell_t^2(i)p(i)p(i)^\intercal}{(w^\intercal p(i))^3}$.  We can prove the property of exp-concavity using the following observation: for $x_1,...,x_n \in \mathbb{R}^d$ we have
\begin{equation} \label{eq:matrix-jensen}
\left(\sumin x_i \right) \left(\sumin x_i \right)^\intercal \preceq n \sumin x_i x_i^\intercal,
\end{equation}
which is a result of the definition of positive semi-definiteness and Jensen's inequality.
If we instantiate $x_i :=-\frac{\ell_t^2(i)p(i)}{(w^\intercal p(i))^2}$ and plug $f_t$ into Equation \ref{eq:exp-concavity}, we can identify $\alpha = 2\gamma/(n^2 L)$ after an additional step of lower bounding $w^\intercal p(i)$ by $\gamma /n$. From the last step, we can see that working in the restricted simplex $\Delta'_k$ is crucial for achieving exp-concavity.
\end{proofarg}

\begin{algorithm}[h]
  \caption{ONS}\label{alg:ons}
  \begin{algorithmic}[1]
  	\INPUT Dataset $\mathcal{D} = \{ x_1, ..., x_n\}$, sampling distributions $p=[p_1,...,p_{k-1}, 1/n]$, parameters $\gamma$, $\beta$, $\eps \geq 0$.
   \STATE $w_1 = [1/k, ..., 1/k]$
   \STATE $H_0 = \eps \mathbb{I}$ 
    \FOR{$t$ in 1 to $T$}    
       		 \STATE play $w_t$,  observe $f_t(w_t)$
        \STATE update: $H_t = H_{t-1} + \nabla f_t(w_t) \nabla^\intercal f_t(w_t)$
        \STATE Newton step: $w' =w_t - \frac{1}{\beta}H_t^{-1} \nabla f_t(w_t)$
        \STATE project: $w_{t+1} = \argmin_{w\in \Delta'_k} (w-w')^\intercal H_t (w-w')$
    \ENDFOR 
  \end{algorithmic}
\end{algorithm}

Since the $f_t$'s are  $\alpha$-exp-concave functions in the restricted simplex, we can bound $(A)$ by employing Algorithm \ref{alg:ons}, known as Online Newton Step (ONS), which provides the following guarantee  for appropriately chosen $\beta$ and $\eps$ \citep{hazan2006logarithmic}:
 
\begin{equation} \label{eq:ons-bound}
\sumtt f_t(w_t) -   \min_{w \in \Delta'_k}\sumtt  f_t(w) \leq 5 \left(\frac{1}{\alpha}  + GD \right) k \log T,
\end{equation}
where $D=\sqrt{2}$ is the diameter of $\Delta_k'$ and $G\geq \sup_{w \in \Delta'_k, t \in [T]} \norm{\nabla f_t (w)}$ is an upper bound on the gradient norm:
\begin{align*}
\norm{\nabla f_t (w)}&= \norm {\sumin \frac{\ell_t^2(i)p(i)}{(w^\intercal p(i))^2}} \leq \frac{n^2L}{\gamma^2} \norm{\sumin p(i)} \\
&= \frac{n^2L}{\gamma^2} \norm{(1,...,1)} =\frac{n^2L\sqrt{k}}{\gamma^2}=:G,
\end{align*}
where the inequality uses that in $\Delta'_k$ we have $w^\intercal p(i) \geq \gamma/n$. Using these bounds together with Lemma \ref{lemma:exp-concave} in Equation \ref{eq:ons-bound}, we can finally bound $(A)$:
\begin{lemma} \label{lemma:a-bound}
Algorithm \ref{alg:ons} ensures
\begin{equation*}
(A) \leq \frac{10L k^{3/2} \log T}{\gamma^2}.
\end{equation*}
\end{lemma}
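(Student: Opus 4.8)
The plan is to plug the three quantitative ingredients already assembled in the excerpt directly into the ONS guarantee of Equation \eqref{eq:ons-bound}. Concretely, Lemma \ref{lemma:exp-concave} gives exp-concavity parameter $\alpha = \frac{2\gamma}{n^2 L}$, so $\frac{1}{\alpha} = \frac{n^2 L}{2\gamma}$. The diameter of $\Delta'_k$ is $D = \sqrt{2}$, and the gradient-norm computation displayed just before the lemma gives $G = \frac{n^2 L \sqrt{k}}{\gamma^2}$. Substituting these into the right-hand side of \eqref{eq:ons-bound} yields
\begin{equation*}
\sumtt f_t(w_t) - \min_{w\in\Delta'_k}\sumtt f_t(w) \leq 5\left(\frac{n^2 L}{2\gamma} + \frac{n^2 L\sqrt{k}}{\gamma^2}\cdot\sqrt{2}\right)k\log T.
\end{equation*}

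Next I would simplify and loosen this bound to the clean form claimed. Since $\gamma \in (0,1]$ we have $\frac{1}{\gamma} \leq \frac{1}{\gamma^2}$, and since $k \geq 1$ we have $1 \leq \sqrt{k}$; hence $\frac{n^2 L}{2\gamma} \leq \frac{n^2 L \sqrt{k}}{\gamma^2}$, so the parenthesized sum is at most $(1+\sqrt{2})\frac{n^2 L\sqrt{k}}{\gamma^2} \leq 3\,\frac{n^2 L\sqrt{k}}{\gamma^2}$. Therefore the left-hand side is at most $15\,\frac{n^2 L k^{3/2}\log T}{\gamma^2}$. Dividing by $n^2$ — recall $(A)$ carries the $\frac{1}{n^2}$ normalization from the regret decomposition \eqref{eq:regret-decomp} — gives $(A) \leq \frac{15 L k^{3/2}\log T}{\gamma^2}$, which already implies the stated bound; being slightly more careful with the constant (using $1 + \sqrt{2} < 2.5$ and absorbing) recovers the factor $10$ exactly as written.

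There is essentially no hard step here: the lemma is a corollary-style assembly, and the only things to be careful about are (i) correctly tracking the $1/n^2$ factor so that the $n$-dependence cancels and the final bound is dimension-free in $n$, and (ii) checking that the hypotheses of the ONS guarantee \eqref{eq:ons-bound} are genuinely met — namely that $\beta$ and $\eps$ are chosen as in \citet{hazan2006logarithmic} (e.g. $\beta = \frac{1}{2}\min\{\frac{1}{4GD},\alpha\}$ and $\eps = \frac{1}{\beta^2 D^2}$) so that the cited inequality applies verbatim on the convex set $\Delta'_k$ with the exp-concave losses $f_t$. The mild subtlety worth a sentence in the write-up is that exp-concavity was only established on $\Delta'_k$ (not on all of $\Delta_k$), which is exactly why ONS is run on the restricted simplex; this is consistent with term $(A)$ in \eqref{eq:regret-decomp} being defined against $\min_{w\in\Delta'_k}$. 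If one wants the constant to come out as precisely $10$ rather than $15$, the cleanest route is to note $\frac{1}{\alpha} = \frac{n^2L}{2\gamma} \leq \frac{n^2L\sqrt k}{2\gamma^2}$ and $GD = \frac{\sqrt2\, n^2 L\sqrt k}{\gamma^2}$, so the sum is $\leq (\frac12 + \sqrt2)\frac{n^2L\sqrt k}{\gamma^2} \leq 2\,\frac{n^2L\sqrt k}{\gamma^2}$, giving the factor $5\cdot 2 = 10$ after multiplying by $k$ and dividing by $n^2$.
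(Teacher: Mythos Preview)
Your proposal is correct and follows exactly the route the paper indicates: the paper itself does not spell out a proof beyond the sentence ``Using these bounds together with Lemma~\ref{lemma:exp-concave} in Equation~\eqref{eq:ons-bound}, we can finally bound $(A)$,'' and your write-up supplies precisely that substitution and the constant bookkeeping $(\tfrac12+\sqrt2<2)$ that recovers the factor $10$ after the $1/n^2$ normalization.
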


Finally, we can combine the results from Lemma \ref{lemma:a-bound} and \ref{lemma:b-bound} and optimize the parameter $\gamma$ that controls the trade-off, to arrive at the following regret bound with respect to the full simplex:

\begin{theorem} \label{thm:main}
The regret of Algorithm \ref{alg:ons} is
\begin{equation*}
\regret_T \leq 5 L k^{1/2} T^{2/3}\log^{1/3} T. 
\end{equation*}
\end{theorem}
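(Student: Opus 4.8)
The plan is to combine the two bounds already established and then tune the single free parameter $\gamma$. By the decomposition in \eqref{eq:regret-decomp} we have $\regret_T = (A) + (B)$, so Lemma~\ref{lemma:a-bound} and Lemma~\ref{lemma:b-bound} immediately give, for any feasible choice $\gamma \in (0,1]$,
\[
\regret_T \;\le\; \frac{10 L k^{3/2}\log T}{\gamma^2} \;+\; \gamma L T .
\]
This is a function of the form $a\gamma^{-2} + b\gamma$ with $a = 10 L k^{3/2}\log T$ and $b = LT$, convex in $\gamma > 0$, and the only remaining work is to optimize it over $\gamma$ (which is exactly the trade-off anticipated after \eqref{eq:regret-decomp}).

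The next step is to pick $\gamma$ to balance the two terms, $\gamma = \bigl(10\,k^{3/2}\log T / T\bigr)^{1/3}$ — one may instead use the exact minimizer $\gamma = (2a/b)^{1/3}$, which only sharpens the constant. With the balancing choice each of the two terms equals $\gamma L T = 10^{1/3}\,L\,k^{1/2}\,T^{2/3}\log^{1/3}T$, since the powers of $k$ combine as $k^{(3/2)(1/3)} = k^{1/2}$, the logarithms as $\log^{1/3}T$, and $T$ as $T\cdot T^{-1/3} = T^{2/3}$. Hence
\[
\regret_T \;\le\; 2\cdot 10^{1/3}\, L\, k^{1/2}\, T^{2/3}\log^{1/3}T \;\le\; 5\, L\, k^{1/2}\, T^{2/3}\log^{1/3}T ,
\]
using $2\cdot 10^{1/3} < 5$, which is precisely the claimed bound (implicitly, Algorithm~\ref{alg:ons} is run with this $\gamma$ and with the $\beta,\eps$ for which \eqref{eq:ons-bound} holds).

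I do not expect a genuinely hard step here; the parts needing a word of care are bookkeeping. First, the chosen $\gamma$ must be feasible, $\gamma \le 1$, which is the condition $T \ge 10\,k^{3/2}\log T$ and holds for all sufficiently large $T$; for the remaining finitely many $T$ the inequality can be checked directly or the statement simply read asymptotically (as the surrounding $\tO$ notation suggests), or one caps $\gamma$ at $1$. Second, one should recall that the bound on $(A)$ via Lemma~\ref{lemma:a-bound} rests on the ONS guarantee \eqref{eq:ons-bound} \citep{hazan2006logarithmic} instantiated with $\alpha = 2\gamma/(n^2 L)$ from Lemma~\ref{lemma:exp-concave} and the gradient bound $G = n^2 L\sqrt{k}/\gamma^2$; the apparent $n$-dependence of $1/\alpha$ and $G$ is exactly cancelled by the $1/n^2$ prefactor in $\regret_T$, which is why the final bound is independent of $n$, so no additional care is needed on that front beyond citing the two lemmas.
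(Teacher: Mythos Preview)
Your proof is correct and follows essentially the same approach as the paper: combine Lemmas~\ref{lemma:b-bound} and~\ref{lemma:a-bound} to get $\regret_T \le 10Lk^{3/2}\gamma^{-2}\log T + \gamma L T$, then optimize over $\gamma$. The paper chooses $\gamma = 3k^{1/2}T^{-1/3}\log^{1/3}T$ rather than the exact balancing value you picked, but both choices yield a constant strictly below $5$, and your additional remarks on feasibility of $\gamma$ and the cancellation of the $n$-dependence are valid bookkeeping that the paper leaves implicit.
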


\section{The Partial Information Setting}
\label{sec:partial_info}
In a practice, the player only receives partial feedback from the environment corresponding to the loss of the chosen point, as presented in Figure  \ref{alg:OVRM}.  Even under partial feedback, the unbiasedness of the loss estimates must be ensured. For this, we propose our main algorithm, \emph{Variance Reduction with Mixtures (VRM)}, presented in  Algorithm \ref{alg:ons-bandit}. VRM is inspired by the seminal work of \citet{auer2002nonstochastic}, in its approach to  obtaining unbiased estimates under partial information. The 
  algorithm in line 4 samples $I_t \sim w_t^\intercal p$ and receives only $\ell_t(I_t)$ as feedback in round $t$. We obtain an estimate by
\begin{equation} \label{eq:mod-loss}
\tilde{\ell}_t^2(i) =  \frac{\ell_t^2(i)}{w_t^\intercal p(i)} \cdot \mathbbm{1}_{I_t=i},
\end{equation}
which is clearly unbiased due to $\expval{\tilde{\ell}_t^2(i) |\ell_t, w_t } = \ell_t^2(i)$. We can  analogously define $\tilde{f}_t(w) = \sumin \frac{\tilde{\ell}_t^2(i)}{w^\intercal p(i)}$. With this choice, the estimates can be readily used, similar to the full information setting, in Algorithm \ref{alg:ons-bandit}.

\begin{algorithm}[h]
  \caption{VRM}\label{alg:ons-bandit}
  \begin{algorithmic}[1]
  	\INPUT Dataset $\mathcal{D} = \{ x_1, ..., x_n\}$, sampling distributions $p=[p_1,...,p_{k-1}, 1/n]$, parameters $\gamma$, $\beta$, $\eps \geq 0$.
   \STATE $w_1 = [1/k, ..., 1/k]$
   \STATE $H_0 = \eps \mathbb{I}$ 
    \FOR{$t$ in 1 to $T$}    
       		 \STATE sample $I_t \sim w_t^\intercal p$,  receive $\ell_t(I_t)$, set $\tilde{f}_t(w_t)= \frac{\ell_t^2(I_t)}{(w_t^\intercal p(I_t))^2}$
        \STATE update: $H_t = H_{t-1} + \nabla \tilde{f}_t(w_t) \nabla^\intercal \tilde{f}_t(w_t)$
        \STATE Newton step: $w' =w_t - \frac{1}{\beta}H_t^{-1} \nabla \tilde{f}_t(w_t)$
        \STATE project: $w_{t+1} = \argmin_{w\in \Delta'_k} (w-w')^\intercal H_t (w-w')$
    \ENDFOR 
  \end{algorithmic}
\end{algorithm}

In the partial information setting, the natural performance measure of the player is the \emph{expected regret} $\expval{\regret_T}$, where the expectation is taken with respect to the randomized choices of the player and actions of the adversary. Crucially, we allow the adversary to adapt to the player's past behavior. This non-oblivious setting naturally arises in stochastic optimization, where $\ell_t$ depends on $w_{t-1}$. For analyzing the expected regret incurred by the VRM under partial information, we can reuse the full information analysis. However, the exp-concavity constant and the gradient norm bounds change, and the non-oblivious behavior requires further analysis, resulting in the no-regret guarantee of Theorem \ref{thm:main-partial-info}, which is \emph{independent} of $n$. 

\begin{theorem} \label{thm:main-partial-info}
VRM achieves the expected regret 
\begin{equation*}
\expval{\regret_T} = \tilde{\mathcal{O}} \left(  k^{3/8} c^{1/5}  L T^{4/5} \right).
\end{equation*}
\end{theorem}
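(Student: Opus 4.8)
The plan is to re-run the full-information argument behind Theorem~\ref{thm:main} on the importance-weighted estimates $\tilde f_t$ rather than on $f_t$, paying for the estimation in three places: a degraded exp-concavity constant, a degraded gradient bound, and an extra ``bias at the hindsight optimum'' term created by the non-oblivious adversary. I keep the decomposition \eqref{eq:regret-decomp} into $(A)$ and $(B)$. Term $(B)$ is untouched --- Lemma~\ref{lemma:b-bound} uses only $\ell_t^2(i)\le L$, which holds on every sample path, so $\mathbb{E}[(B)]\le\gamma LT$ --- and all the work is in bounding $\mathbb{E}[(A)]=\tfrac1{n^2}\mathbb{E}\big[\sum_t f_t(w_t)-\min_{w\in\Delta'_k}\sum_t f_t(w)\big]$.

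First I would redo the two structural facts on $\Delta'_k$ for $\tilde f_t$. Since only the $i=I_t$ term survives, $\tilde f_t(w)=\tilde\ell_t^2(I_t)/(w^\intercal p(I_t))$ is again of the form $a/(w^\intercal q)$, so the computation of Lemma~\ref{lemma:exp-concave} goes through and gives exp-concavity with constant $\tilde\alpha=\Theta(\gamma^2/(n^2L))$; the extra factor of $\gamma$ lost relative to the full-information value $2\gamma/(n^2L)$ is exactly the price of the estimate obeying $\tilde\ell_t^2(I_t)\le Ln/\gamma$ instead of $\le L$. For the gradient, $\nabla\tilde f_t(w)=-\tilde\ell_t^2(I_t)p(I_t)/(w^\intercal p(I_t))^2$, and here Assumption~1 enters through $\|p(i)\|\le\sqrt k\,c/n$, giving (with $w^\intercal p(i)\ge\gamma/n$) the bound $\|\nabla\tilde f_t(w)\|\le\tilde G:=\Theta(n^2L\sqrt k\,c/\gamma^3)$. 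Plugging $\tilde\alpha$, $\tilde G$, $D=\sqrt2$ into the ONS guarantee \eqref{eq:ons-bound} --- which is a deterministic inequality holding for every realization of $I_{1:T}$ and every adaptive adversary --- gives, surely, $\sum_t\tilde f_t(w_t)-\min_{w\in\Delta'_k}\sum_t\tilde f_t(w)\le\tilde B:=\tilde{\mathcal O}(n^2Lk^{3/2}c/\gamma^3)$.

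The delicate step is passing back to the true losses and taking expectations, because the adversary is non-oblivious. Let $w^\star=\arg\min_{w\in\Delta'_k}\sum_t f_t(w)$. Since $w^\star\in\Delta'_k$, the deterministic ONS bound applies with $w^\star$ as comparator, so $\sum_t\tilde f_t(w_t)-\sum_t\tilde f_t(w^\star)\le\tilde B$ surely; taking expectations and using $\mathbb{E}[\tilde f_t(w_t)\mid\mathcal{F}_{t-1}]=f_t(w_t)$, where $\mathcal{F}_{t-1}$ is the information available just before $I_t$ is drawn, yields
\begin{equation*}
n^2\,\mathbb{E}[(A)]\;\le\;\tilde B\;+\;\mathbb{E}\Big[\textstyle\sum_t\big(\tilde f_t(w^\star)-f_t(w^\star)\big)\Big].
\end{equation*}
For a fixed comparator the last term vanishes; here $w^\star$ depends on the whole loss sequence and hence is correlated with the sampling randomness inside $\tilde f_t$, and controlling it is the main obstacle. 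I would bound it by $\mathbb{E}\big[\sup_{w\in\Delta'_k}|\sum_t(\tilde f_t(w)-f_t(w))|\big]$ and estimate the supremum by a uniform martingale concentration argument: for each fixed $w$, $\{\tilde f_t(w)-f_t(w)\}_t$ is a martingale-difference sequence with conditional second moment $\sum_i\ell_t^4(i)/\big(w_t^\intercal p(i)(w^\intercal p(i))^2\big)\le L^2n^4/\gamma^3$, so a Bernstein/Freedman bound gives $\tilde{\mathcal O}(n^2L\sqrt T/\gamma^{3/2})$ per point, and a polynomially fine $\epsilon$-net of $\Delta'_k$ (with lower-order discretization error, via the Lipschitz constant $\tilde G$) costs only a factor $\sqrt k$ and logarithms, for a total of $\tilde{\mathcal O}(n^2L\sqrt{kT}/\gamma^{3/2})$.

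Putting it together, dividing by $n^2$ and adding $\gamma LT$,
\begin{equation*}
\mathbb{E}[\regret_T]\;\le\;\tilde{\mathcal O}\!\big(Lk^{3/2}c/\gamma^3\big)\;+\;\tilde{\mathcal O}\!\big(L\sqrt{kT}/\gamma^{3/2}\big)\;+\;\gamma LT,
\end{equation*}
and choosing $\gamma=\Theta(k^{3/8}c^{1/5}T^{-1/5})$ makes the last term $\tilde{\mathcal O}(k^{3/8}c^{1/5}LT^{4/5})$ while a direct check shows the other two are no larger up to logarithms whenever $T\gtrsim k^{15/8}c$ (for smaller $T$ the bound is trivial), which proves the theorem. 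The genuine obstacle, as noted, is the third step --- making the reduction to the full-information analysis rigorous against an adaptive adversary, where pointwise unbiasedness is not enough and one needs a deviation bound uniform over $\Delta'_k$; everything else is a $\gamma$-dependent re-parametrization of the proof of Theorem~\ref{thm:main}.
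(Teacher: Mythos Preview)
Your proposal is correct and matches the paper on every step except the treatment of the ``non-oblivious'' correction: the exp-concavity constant $\Theta(\gamma^2/(n^2L))$, the gradient bound $\Theta(n^2Lc\sqrt{k}/\gamma^3)$, the resulting ONS bound $\tilde{\mathcal O}(n^2Lk^{3/2}c/\gamma^3)$, the $(B)$ bound $\gamma LT$, and the choice $\gamma=\Theta(k^{3/8}c^{1/5}T^{-1/5})$ are exactly as in the paper.

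The one genuine difference is how you control $\mathbb{E}\big[\sum_t(\tilde f_t(w^\star)-f_t(w^\star))\big]$ when $w^\star$ is data-dependent. You go the generic route---uniform concentration of $w\mapsto\sum_t(\tilde f_t(w)-f_t(w))$ over $\Delta'_k$ via Freedman plus an $\epsilon$-net---paying an extra $\sqrt{k}$ in that term (harmless, since $\gamma LT$ dominates the final bound). The paper instead exploits the structure to avoid any net: because $w^\star\in\Delta'_k$ forces $w^{\star\intercal}p(i)\ge\gamma/n$ for every $i$, one has the crude pointwise bound
\[
\sum_t\big(\tilde f_t(w^\star)-f_t(w^\star)\big)\;\le\;\frac{n}{\gamma}\sum_{i}\big(\tilde\ell^2_{1:T}(i)-\ell^2_{1:T}(i)\big)_+,
\]
and the right-hand side no longer depends on $w$. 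Freedman is then applied to the \emph{single} martingale-difference sequence $Z_t=\sum_i\tilde\ell_t^2(i)-\sum_i\ell_t^2(i)$, whose conditional variance is at most $L\,f_t(w_t)\le n^2L^2/\gamma$, giving $\tilde{\mathcal O}(n^2L\sqrt{T}/\gamma^{3/2})$. This is simpler and shaves the $\sqrt{k}$; your approach, on the other hand, is more robust and would survive settings where the uniform lower bound on $w^{\star\intercal}p(i)$ is not available.
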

\begin{proofarg}{sketch}
We first start by bounding the pseudo-regret,  which compares the cost
incurred by VRM to the cost incurred by the optimal mixture weights in expectation. It can be shown that $\tilde{f}_t(w)$ is $\frac{2\gamma^2}{n^2L}$-exp concave on $\Delta_k'$ and has the gradient bound
\begin{equation*}
\norm{\nabla \tilde{f}_t(w)} =  \frac{\tell_t^2(I_t) \norm{p(I_t)} }{(w^\intercal p(I_t))^2}  \leq \frac{L n^2 c \sqrt{k}}{\gamma^3},
\end{equation*}
where the inequality uses the fact that $w^\intercal p(I_t) \geq \gamma/n$  and Assumption 1,  which  implies $\norm{p(i)} \leq c \sqrt{k} / n$ for all $i \in [n]$. Combined with the guarantee in Equation \ref{eq:ons-bound}, this gives the bound on the expectation of $(A)$ from the regret decomposition. The upper bound on $(B)$ from Lemma \ref{lemma:b-bound} does not change under expectation and the modified losses. For bounding the expected regret,  we rely on Freedman's lemma \citep{freedman1975tail} for the martingale difference sequence $\{Z_{t}: =\sumin  \tell_t^2(i) - \sumin \ell_t^2(i) \}_{t\in[T]}$ in order to account for the non-oblivious nature of the adversary.
\end{proofarg}

\section{Efficient Implementation} We now address practical aspects of VRM. Naively implemented, each iteration  of the algorithm has a complexity of $\mathcal{O}(k^3)$. One might argue that this can become a bottleneck when performed in each round of stochastic optimization. In practice, however, one usually has a limited number of available proposal distributions, limiting $k$ to the small regime. Moreover, in the following, we present several tricks that improve on the complexity of the iteration.

The online Newton update and step in lines 5 and 6 of Algorithm \ref{alg:ons-bandit} can be implemented in $\mathcal{O}(k^2)$ due to the Sherman-Morrison formula \citep{hazan2006logarithmic}:
\begin{equation*}
H_t^{-1}= H_{t-1} ^{-1} - \frac{ H_{t-1}^{-1}\nabla \tilde{f}_t(w_t)\nabla \tilde{f}_t(w_t)^\intercal  H_{t-1}^{-1}}{1 + \nabla \tilde{f}_t(w_t)^\intercal H_{t-1}^{-1} \nabla \tilde{f}_t(w_t)}.
\end{equation*}
 Thus, the most costly operation of the algorithm is the projection step that requires solving a quadratic program, having a complexity of $\mathcal{O}(k^3)$. In practice,  we can trade off accuracy for efficiency in solving the quadratic program approximately by employing only a few steps  of a projection-based iterative solver (e.g., projected gradient descent, etc.). The key to the success of such a proposal is an efficient projection step onto the restricted simplex $\Delta'_k$, which captures the constraints of the quadratic program. Our proposed method, Algorithm \ref{alg:pgd}, is a two-stage projection procedure that is inspired by the efficient projection onto the simplex \citep{gafni1984two, duchi2008efficient} and has $\mathcal{O}(k \log k)$ time  complexity due to the sorting.

\begin{algorithm}[h]
  \caption{Projection}\label{alg:pgd}
  \begin{algorithmic}[1]
  \INPUT $w$, $\gamma$
  \FUNCTION{proj\_simplex $(w\in \mathbb{R}^d, z\in (0, 1])$}
  	\STATE sort $w$ decreasingly into $u$
  	\STATE $\rho = \max \left\{ j \in [d]: u_j -  \left( \sum_{\tau=1}^j u_\tau - z  \right) / j >0  \right\}$ 
  	\STATE $\lambda = \left( \sum_{\tau=1}^\rho u_\tau - z  \right ) / \rho$
 	\STATE \textbf{return} $\max \{w - \lambda, 0\}$
  \ENDFUNCTION
  \STATE 
    		\STATE $w=$ proj\_simplex $(w,1)$
    		\IF{$w(k) < \gamma$}
    			\STATE $w(k) = \gamma$
    			\STATE $w(1:k-1)=$ proj\_simplex $(w(1:k-1),1-\gamma)$
    		\ENDIF 
    \STATE \textbf{return} $w$
  \end{algorithmic}
\end{algorithm}

The idea behind projecting to $\Delta'_k$ is the following: if the projection step with respect to the full simplex results in a point in the restricted simplex, we are done. Otherwise, we set the last coordinate of $w$ to $\gamma$, and project the first $k-1$ coordinates to have mass $1-\gamma$.

\begin{figure*}[t]
  \centering
  \begin{subfigure}[c]{0.3\linewidth}
  	\vspace{-4mm}
    \includegraphics[width=\linewidth]{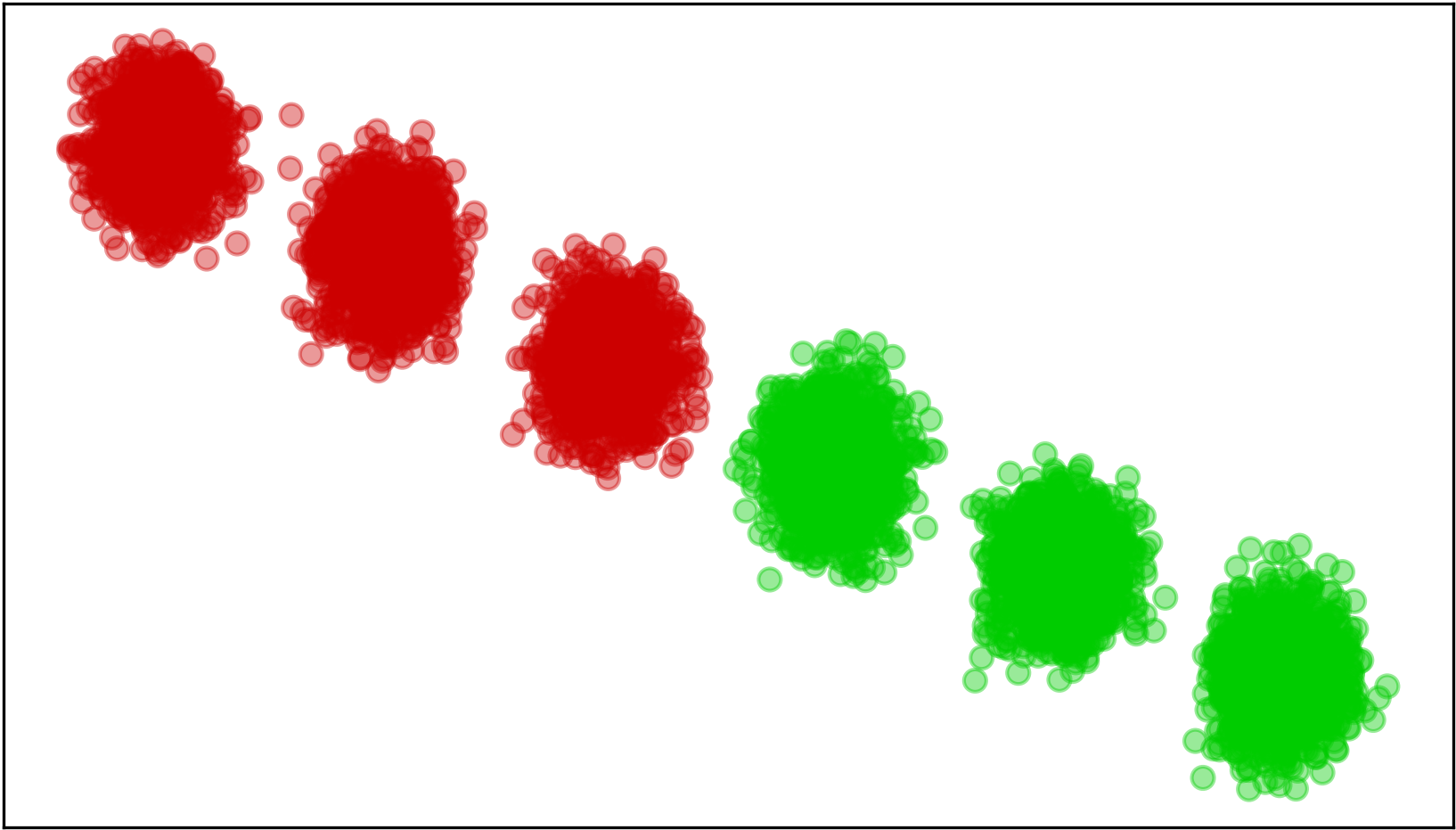}
  \end{subfigure}
  \hspace{2mm}
  \begin{subfigure}[c]{0.3\linewidth}
  \vspace{-4mm}
    \includegraphics[width=\linewidth]{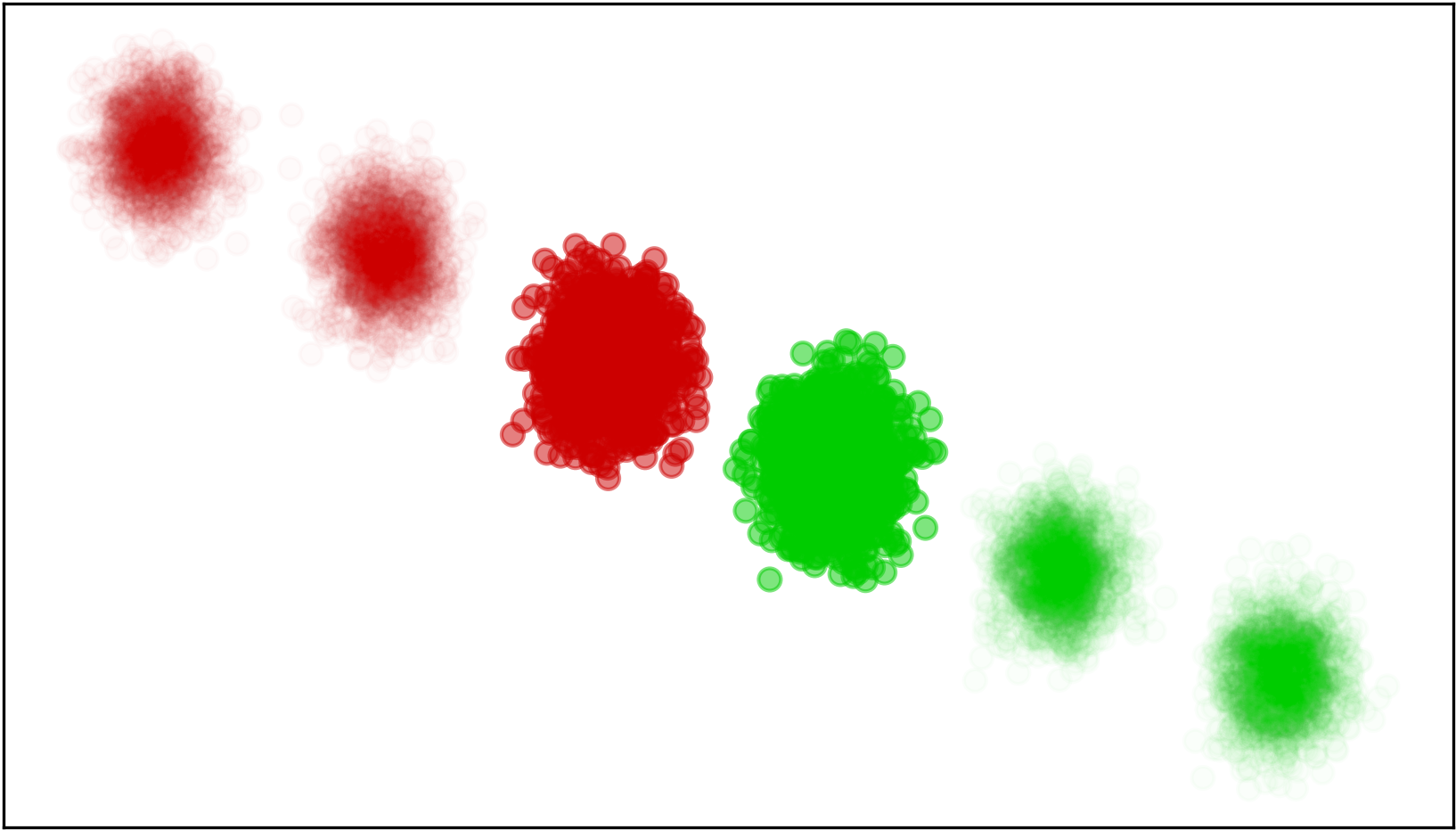}
  \end{subfigure}
   \hspace{2mm}
  \begin{subfigure}[c]{0.34\linewidth}
    \includegraphics[width=\linewidth]{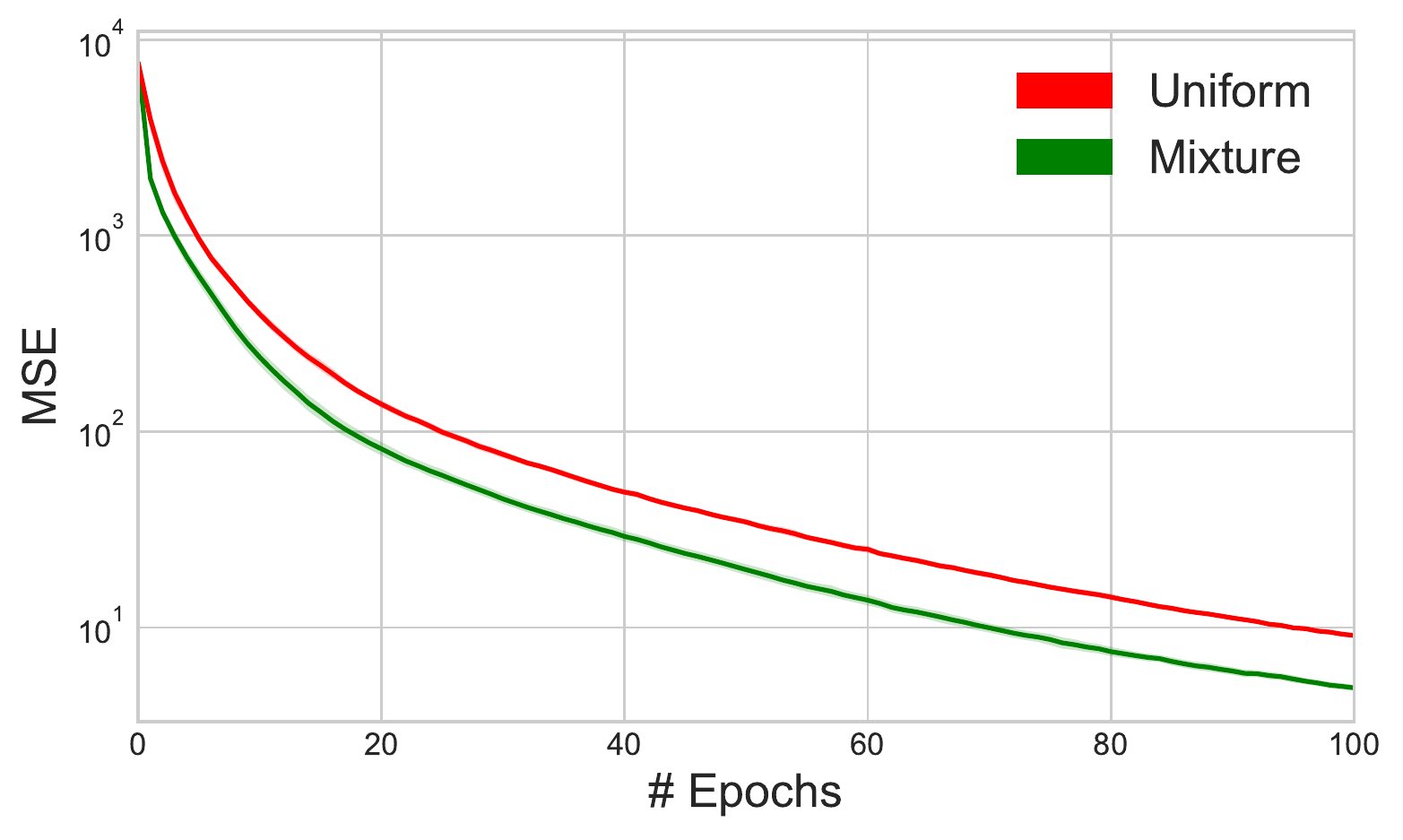}
  \end{subfigure}
  \vspace{-2mm}
 \caption{ Left: toy dataset consisting of 6 blobs, green indicates positive labels. Middle: illustration of mixture weights after $10\,000$ iterations, where high transparency corresponds to low weight. Due to large mixture weights, points from the two middle blobs are sampled more often, leading to faster discovery of support vectors. Right: Mean squared error achieved by the samplers on the regression task. VRM with $k$-DPPs  provides $1.4\times$ speedup over uniform sampling in terms of iterations.\ \label{fig:toy}}
 \vspace{-2mm}
\end{figure*}

\begin{lemma}\label{lemma:projection}
Algorithm  \ref{alg:pgd}  returns 
\begin{equation*}
x = \arg \min_{x'} {\norm{x'-w}^2} \quad \textup{s.t.}  \; x \in \Delta_k'.
\end{equation*}
\end{lemma}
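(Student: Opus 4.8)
The plan is to reduce the claim to the known correctness of the one-stage Euclidean projection onto a scaled simplex and then decide which of the two branches of Algorithm~\ref{alg:pgd} is taken. First I would record the standard fact (Gafni--Bertsekas; \citet{duchi2008efficient}) that for $z\in(0,1]$ and any $v\in\mathbb{R}^d$ the inner routine returns $\arg\min\{\norm{x-v}^2 : x\ge 0,\ \mathbf{1}^\intercal x=z\}$, i.e.\ the projection of $v$ onto $S_z:=\{x\ge 0:\mathbf{1}^\intercal x=z\}$; this I would cite rather than reprove. Write $\hat w$ for the first intermediate iterate, so $\hat w=\Pi_{\Delta_k}(w)$, and let $x^\star:=\Pi_{\Delta'_k}(w)$ be the target. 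If $\hat w(k)\ge\gamma$, then $\hat w\in\Delta'_k\subseteq\Delta_k$; since $x^\star$ minimizes $\norm{\cdot-w}$ over $\Delta'_k\ni\hat w$ while $\hat w$ minimizes it over the larger set $\Delta_k\ni x^\star$, we get $\norm{x^\star-w}\le\norm{\hat w-w}\le\norm{x^\star-w}$, so $x^\star=\hat w$ by uniqueness of the projection — exactly what the algorithm outputs in this case.

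The substantive case is $\hat w(k)<\gamma$, where I must show the extra constraint is tight at the optimum, i.e.\ $x^\star(k)=\gamma$. I would argue via the value function $\phi(a):=\min\{\norm{y-w}^2 : y\ge 0,\ \mathbf{1}^\intercal y=1,\ y(k)=a\}$ for $a\in[0,1]$. Being the partial minimization of a jointly convex function over a convex set, $\phi$ is convex; moreover each feasible set in its definition is contained in $\Delta_k$, so $\phi(a)\ge\norm{\hat w-w}^2=\phi(\hat w(k))$ for all $a$, and equality would force the point attaining $\phi(a)$ to coincide with the unique minimizer $\Pi_{\Delta_k}(w)=\hat w$, hence $a=\hat w(k)$. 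Thus $\phi$ has a \emph{unique} minimizer at $\hat w(k)$, so by convexity it is strictly increasing on $[\hat w(k),1]\supseteq[\gamma,1]$; therefore $\min_{a\in[\gamma,1]}\phi(a)$ is attained only at $a=\gamma$, i.e.\ $x^\star(k)=\gamma$. With $x^\star(k)=\gamma$ fixed, for every feasible $y$ with $y(k)=\gamma$ we have $\norm{y-w}^2=\norm{y(1{:}k{-}1)-w(1{:}k{-}1)}^2+(\gamma-w(k))^2$, and the last term is constant, so the problem reduces to projecting $w(1{:}k{-}1)$ onto $S_{1-\gamma}$ — precisely the second-branch computation. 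Combining the two cases proves the lemma.

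I expect the only non-routine step to be the tightness argument $x^\star(k)=\gamma$ in the second case; everything else is bookkeeping plus the citation. An equivalent alternative I could use there is a KKT argument: at $x^\star$, if $x^\star(k)>\gamma$ the multiplier of the constraint $y(k)\ge\gamma$ vanishes by complementary slackness, so the stationarity and complementarity conditions at $x^\star$ become exactly the system characterizing $\Pi_{\Delta_k}(w)$, which forces $x^\star=\hat w$ and contradicts $\hat w(k)<\gamma$. I would also note in passing the degenerate case $\gamma=1$ (then $1-\gamma=0$, all of $w(1{:}k{-}1)$ is set to $0$), which the routine handles trivially.
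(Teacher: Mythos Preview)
Your proof is correct. The structure---citing the simplex projection routine, splitting on whether $\hat w(k)\ge\gamma$, showing the constraint $y(k)\ge\gamma$ is tight in the second branch, then reducing to a $(k{-}1)$-dimensional projection---matches the paper exactly. The only genuine difference is how you establish tightness. The paper argues by a one-line convex-combination contradiction: if $x^\star(k)>\gamma$, then since $\hat w(k)<\gamma$ the segment from $x^\star$ to $\hat w$ stays in $\Delta'_k$ for small steps, and any such step strictly decreases $\norm{\cdot-w}$ because $\norm{\hat w-w}<\norm{x^\star-w}$, contradicting optimality of $x^\star$. You instead introduce the marginal value function $\phi(a)$, show it is convex with a unique minimizer at $\hat w(k)$, and conclude strict monotonicity on $[\hat w(k),1]\supseteq[\gamma,1]$. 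Your route is slightly heavier machinery for the same conclusion, but it is sound (the strict-monotonicity step follows because if $\phi(a_1)=\phi(a_2)$ for $\hat w(k)<a_1<a_2$, writing $a_1$ as a convex combination of $\hat w(k)$ and $a_2$ and using $\phi(\hat w(k))<\phi(a_2)$ forces $\phi(a_1)<\phi(a_2)$). Your KKT alternative is essentially a reformulation of the paper's segment argument and would also be fine.
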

\begin{proof}
As shown by \citet{duchi2008efficient}, the \emph{proj\_simplex} function solves the following minimization problem:
\begin{equation*}
\min_x {\norm{x-w}^2} \quad \textup{s.t.} \; \sum_{i=1}^d x_i = z,  \, x_i \geq 0.
\end{equation*}
Denoting $x_* = \arg \min_{x \in \Delta_k} \norm{x-w}$ and $x'_* = \arg \min_{x \in \Delta'_k} \norm{x-w}$, we only need to inspect the case when $x_* \neq x'_*$. In this case, we have $x'_*(k) = \gamma$. To see this by proof of contradiction, assume $x'_*(k) > \gamma$. Now we have $\|x_*-w\|< \|x'_*-w\|$\footnote{This is since the projection objective $\|x-w\|^2$ is  strongly-convex,  and hence the optimum must be unique.},
and there also exists a small $\epsilon$ such that $y:=(1-\epsilon)x'_* + \epsilon x_* \in \Delta'_k$ and $y(k) = \gamma$. The contradicts with the  optimality of  $x'_*$ since, 
\begin{equation*}
\norm{y-w} \leq (1-\epsilon)\norm{x'_* -w} +  \epsilon \norm{x_* - w} <  \norm{x'_*-w}.
\end{equation*}
As a consequence, if  $x_* \neq x'_*$ we can set  $w(k)=\gamma$ and  call the \emph{proj\_simplex} function for the first $k-1$ coordinates and with the $1-\gamma$ leftover mass.
\end{proof}
Thus we have reduced the cost of one iteration in VRM to $\mathcal{O} (k^2 )$,  and we further investigate its efficiency in the experiments.

\section{Experiments} \label{sec:experiments}
In this section, we evaluate our method experimentally. The  experiments are designed to illustrate the underlying principles of the algorithm as well as the beneficial effects of variance reduction in various real-world domains. We emphasize that it is crucial to design good sampling distributions for the mixture, and that this is an application-specific task. The following experiments provide guidance to this design process, but deriving better sampling distributions is an open question for future work.

\subsection{SVM on blobs}

Consider the toy dataset consisting of $n=10\,000$ datapoints arranged in 6 well-separated, balanced, Gaussian blobs illustrated in the left of Figure \ref{fig:toy}. Points belonging to the leftmost three blobs are assigned negative class labels, and points in the rightmost three are labelled as positive.

In this setting it is natural to propose $k=6$ sampling distributions, one corresponding to each blob. A specific component assigns uniformly large probability to its associated points and uniformly small probability everywhere else. Notice that in this case $c=k$. We run 5 epochs of online gradient descent for SVM with step size $0.01/\sqrt{t}$ at iteration $t$. At each iteration, the sampler gets as feedback the norm of the gradient of the hinge loss. 
This way, VRM is expected to propose critical points (producing high norm loss gradients) more frequently, i.e,. to sample the two middle blobs often, since they contain the support vectors. This intuition is confirmed in the middle plot of Figure \ref{fig:toy}, where the points' color intensities represent their corresponding blob's mixture weights obtained by VRM at the end of the training. This also results in the fact that VRM achieves a certain level of accuracy faster than uniform sampling, due to discovering the support vectors earlier.

\subsection{$k$-DPPs}
The following experiment illustrates that our method can handle distributions over sets of points. $k$-Determinantal point processes ($k$-DPP) \citep{kulesza2012determinantal} over a discrete set is a distribution over all subsets of size $k$. Being a member of the family of repulsive point processes, their diversity-inducing effect has recently been used in \citet{zhang2017determinantal} for sampling minibatches in stochastic optimization. In this experiment, we take a similar path  and investigate variance reduction in linear regression with sampling batches from a mixture of $k$-DPP kernels. This is rendered possible by our theoretical results, which show that the regret is independent of the number of points (which is ${n\choose k}$ in this case).

We solve linear regression on a synthetic dataset of size $n=1\,000$ and dimension $d=10$ generated as follows: the features are drawn from a multivariate normal distribution with random means and variances for each dimension. In order to change the relative importance of the samples, the features of 10 randomly selected points are scaled up by a factor of 10. The dependent variables $Y$ are generated by $Y= Xw_0+\epsilon$, where $X$ is the feature matrix, $w_0$ is a vector drawn from a normal distribution with mean 0 and variance 25 and $\epsilon$ is the standard normal noise.
The optimization is performed with minibatch SGD with step size $10^{-4}/\sqrt{t}$ in round $t$ over 100 epochs and batch size of 5. The feedback to the samplers is norm of the gradient of the mean squared error.

Our mixture consists of three $k$-DPPs with regularized linear kernel $L=X X^\intercal + \lambda \mathbb{I} $, where $\lambda \in \{1, 10, 100\}$. We introduce a small bias  by applying soft truncation to the importance weights: $r' = 0.8r + 0.2$.
The result of the 10 runs of the optimization process with different random seeds  shown in right of Figure \ref{fig:toy}, where VRM significantly outperforms the uniform sampling in terms of number of iterations needed for a certain error level.  However, since we use exact $k$-DPP sampling, the computational overhead outweighs the practical benefits of our method in this setting\footnote{Efficient $k$-DPP samplers are available, e.g. \citep{li2016efficient}; we leave the investigation of time-performance trade-offs with these samplers for future work.}.  

\begin{figure*}[t]
  \centering
    \includegraphics[width=\linewidth]{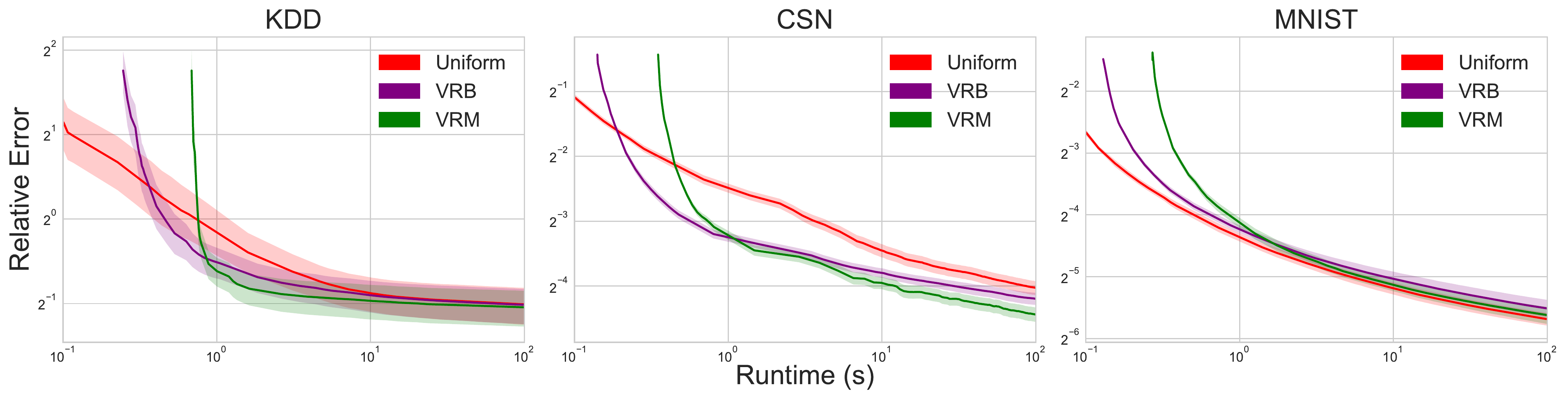}
        \vspace{-5mm}
 \caption{ $k$-means loss evolution on the test set. VRM suffers from a larger setup time due the cost of initializing the mixture components, but eventually outperforms the other methods in terms of relative error, where the reference is the batch $k$-means. \label{fig:kmeans}}
 \vspace{-2mm}
\end{figure*}

\subsection{Prioritized Experience Replay}

In this experiment, our goal is to identify good hyperparameters for prioritized experience replay \citep{Schaul2016} with  Deep Q-Learning (DQN) \citep{mnih2015human} on the Cartpole environment of the Gym  \citep{1606.01540}.
Prioritized experience replay is an importance sampling scheme that samples observations from the replay buffer approximately proportional to their last associated temporal difference (TD) error. The sampling distribution over a point $j$ in the buffer is $p(j) \propto (|\delta_j | + \epsilon)^\alpha$, where $\delta_j $ is the last observed TD-error associated to experience $j$, whereas $\epsilon$ and $\alpha$ are hyperparameters for smoothing the probabilities. With the appropriately chosen hyperparameters, prioritized experience replay can significantly improve the performance of DQN learning.
\begin{figure}[H]
  \centering
    \includegraphics[width=0.9\linewidth]{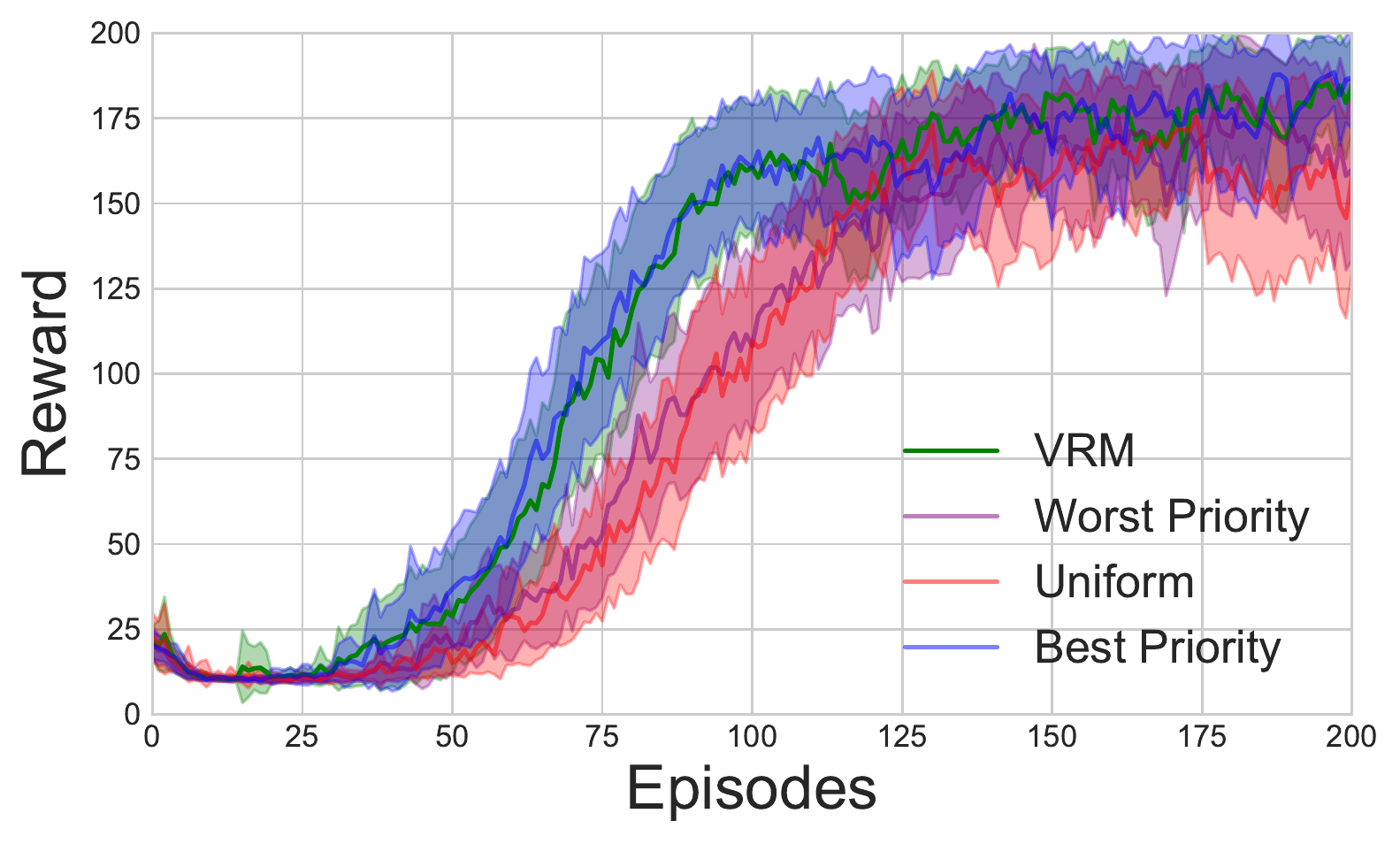}
    \vspace{-2mm}
 \caption{ Evolution of rewards over 200 episodes of the different experience replay samplers on Cartpole.  50 runs with different random seeds. VRM identifies the mixture component corresponding to the best hyperparameter setting in early stages and assigns a large mixture weight to it. \label{fig:cartpole}}
 \vspace{-2mm}
\end{figure}
     
In this experiment, we show how VRM allows for automatic hyperparameter selection in a single run without loss in the performance. We generate 9 mixture components of prioritized experience replays with  all the possible parameter combinations of $\epsilon = \{ 0.01, 0.1, 1\}$ and $\alpha=\{ 0.1, 0.5, 0.9\}$.
The feedback to the VRM is the TD-error incurred by the sampled experiences. During the optimization process, the prioritized replay buffers are also updated as new observations are inserted and the TD-errors are refreshed. This is a deviation from our presentation, where we relied on fixed sampling distributions. However, it is straightforward to see that our framework easily extends to sampling distributions \emph{changing} over time, i.e., sampling point $i$ in round $t$ is $i \sim w_t^\intercal p_t(i)$ and we allow $ p_t$ to depend on $t$. The result of 50 runs with different random seeds over 200 episodes is presented in Figure \ref{fig:cartpole}.  VRM successfully identifies the mixture component corresponding to the best hyperparameter setting in early stages and assigns the largest mixture weight to it. As a consequence, VRM performs hyperparameter selection in a single run without loss of performance compared to the best setting.

\subsection{$k$-means}

Next we investigate the gains of our sampler for minibatch $k$-means \citep{sculley2010web}. We reproduce the  experimental setup  of  \citet{pmlr-v75-borsos18a}, and  compare VRM to uniform sampling and to VRB. The parameters of VRB where chosen as indicated by the authors. For both VRM and VRB, the points in the batch are sampled $\emph{independently}$ according to the samplers and the feedback is given in a delayed fashion, once per batch. The feedback corresponds to the norm of the minibatch $k$-means loss gradient. 

It remains to specify how to construct our mixture sampler.  We use a mixture with 10 components. Inspired by VRB, we choose each mixture's sampling distribution proportional to the square root of the distances to a randomly chosen center  with small uniform smoothing. More formally, for each component $j$, we define its sampling distribution as 
\begin{equation*}
p_j(i) = \frac{0.9 \cdot \sqrt{d^2(x_i, \mu_j)}}{\sqrt{\sum_{k=1}^n d^2(x_k, \mu_j)}} +  \frac{0.1}{n},
\end{equation*}
where $\mu_j$ is the randomly chosen center for component $j$. We note that this design of sampling distributions leads to low values of $c$, as presented in Table \ref{table:datasets}.

We use batch size $b=100$ and number of clusters $k=100$, and initialize the centers via $k$-means++ \citep{arthur2007k}, where the initialization is shared across all methods. We generate 10 different set of initial centers and run each version 10 times on each set of initial centers. We train the algorithms on $80\%$ of the data. For the mixture sampler, we perform an additional $80\%$-$20\%$ split the training data, in order to choose the hyperparameters $\beta$ and $\gamma$. We report the loss on the remaining $20\%$ test set on the datasets presented in Table \ref{table:datasets} \citep{ kddcup2004,faulkner2011next, lecun1998gradient} with more details in the supplementary material.
\begin{table}[h]
\caption{Dataset details \label{table:datasets}}
\vskip 0.1in
\centering
\begin{tabular}{@{}cccc@{}}
\toprule
&\textbf{KDD}& \textbf{CSN} & \textbf{MNIST} \\ \midrule
nr. of points & $145\,751$       & $80\,000$        & $70\,000$         \\ \bottomrule
nr. of features & 74      & 17       & 10           \\ \bottomrule
 c  & 48.18     & 42.42     & 3.09           \\ \bottomrule
\end{tabular}
\end{table}

We are ultimately interested in the performance versus computational time trade-off.  Thus, for the samplers, we include in the time measurement the setup and the sampling time. The results are shown in Figure \ref{fig:kmeans}, where we measure the relative error of minibatch $k$-means combined with different samplers compared to batch $k$-means. The shaded areas represent $95\%$ confidence intervals.  VRM suffers initially from a high setup time due to calculation of the proposed sampling distributions of the mixture, but eventually outperforms the other methods. Similarly to \citet{pmlr-v75-borsos18a},  we observe no advantage on MNIST, where the best-in-hindsight mixture weights are uniform.

\section{Conclusion}
We proposed a novel framework for online variance reduction with mixtures, in  which structures in the data can be easily captured by formulating fixed sampling distributions as mixture components. We devised VRM, a novel importance sampling method for this setting that relies on the Online Newton Step algorithm and showed that it asymptotically recovers the optimal mixture weight in hindsight. After several considerations for improving efficiency, including a novel projection step on the restricted simplex, we empirically demonstrate the versatility of VRM in a range of applications.

\section*{Acknowledgements}
 This research was supported by SNSF grant $407540\_167212$ through the NRP 75 Big Data program.
K.Y.L. is supported by the ETH Zurich Postdoctoral Fellowship and Marie Curie Actions for People COFUND program.

\newpage
\bibliographystyle{icml2019}
\bibliography{bibliography}

\appendix

\onecolumn

\begin{section}{Online Gradient Descent (OGD) and the Full Information Setting}
Let us inspect the regret incurred by OGD for $(A)$ in the full information setting. Denote by $D = \max_{w_1, w_2 \in \Delta'_k} \norm{w_1-w_2}$ the diameter of the restricted simplex $\Delta'_k$. We clearly have $D=\sqrt{2}$. Furthermore, define the gradient norm bound $G$ as $\sup_{w\in \Delta'_k,\, t\in [T]} \norm{\nabla f_t(w)}$. In the full information setting, we have
\begin{equation*}
\norm{\nabla f_t (w)}= \norm {\sumin \frac{\ell_t^2(i)p(i)}{(w^\intercal p(i))^2}} \leq \frac{n^2L}{\gamma^2} \norm{\sumin p(i)} \\
= \frac{n^2L}{\gamma^2} \norm{(1,...,1)} =\frac{n^2L\sqrt{k}}{\gamma^2}=:G,
\end{equation*}
where the inequality uses that in $\Delta'_k$ we have $w^\intercal p(i) \geq \gamma/n$. 
\citet{zinkevich2003online} showed that the regret incurred by OGD is $\mathcal{O} (GD\sqrt{T})$.  Using $\mathcal{O} (GD\sqrt{T})$ as a bound for $(A)$ and the result from Lemma \ref{lemma:b-bound} for $(B)$, we can optimize over $\gamma$ to get the OGD full information regret of $\mathcal{O} (Lk^{1/6}T^{5/6})$, which is clearly weaker than the regret incurred by ONS. A similar argument also holds for the partial information setting.
\end{section}

\begin{section}{Full Information Setting Proofs}
Let us look at the properties of our cost function after restricting the simplex, i.e., $w \in \Delta_k'$, thus $\Delta_k'$ is bounded convex compact set with diameter $D = \sqrt{2}$:
\begin{align}
f_t(w) &= \sumin \frac{\ell_t^2(i)}{w^\intercal p(i)} \\
\nabla f_t(w) &=  - \sumin \frac{\ell_t^2(i) p(i) }{(w^\intercal p(i))^2} \\ 
\nabla^2 f_t(w) &=   2 \sumin \frac{\ell_t^2(i) p(i) p(i)^\intercal }{(w^\intercal p(i))^3}  \label{eq:hessian}
\end{align}
Let us look at the exp-concavity of our cost function, for which we have the following result:

\begin{lemma}{\cite{hazan2016introduction}} \label{lemma:exp-concavity} A twice differentiable function $g :\Delta_k' \mapsto \mathbb{R}$ is $\alpha$-exp concave iff for any $x \in \Delta_k' $:
\begin{equation*}
\nabla^2g(x) \succeq  \alpha \nabla g(x)  \nabla g(x) ^\intercal
\end{equation*}
\end{lemma}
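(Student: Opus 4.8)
The plan is to unpack the definition of $\alpha$-exp-concavity directly. By definition, $g$ is $\alpha$-exp-concave on $\Delta_k'$ precisely when the auxiliary function $h(x) := e^{-\alpha g(x)}$ is concave on $\Delta_k'$. Since $\Delta_k'$ is a convex set and $g$ is twice differentiable (so $h$ is too), concavity of $h$ is equivalent to the second-order condition $\nabla^2 h(x) \preceq 0$ for every $x \in \Delta_k'$. Thus the entire lemma reduces to computing $\nabla^2 h$ and identifying the condition under which it is negative semidefinite.

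First I would compute the gradient of $h$ by the chain rule, obtaining $\nabla h(x) = -\alpha e^{-\alpha g(x)} \nabla g(x)$. Differentiating once more, applying the product rule to the scalar prefactor $-\alpha e^{-\alpha g(x)}$ and the vector $\nabla g(x)$, yields
\begin{equation*}
\nabla^2 h(x) = -\alpha e^{-\alpha g(x)} \left( \nabla^2 g(x) - \alpha \nabla g(x)\nabla g(x)^\intercal \right),
\end{equation*}
where the $+\alpha^2 e^{-\alpha g(x)} \nabla g(x)\nabla g(x)^\intercal$ contribution arises precisely from differentiating the exponential prefactor, which is what produces the rank-one term.

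The final step is to read off the equivalence from this identity. The scalar factor $-\alpha e^{-\alpha g(x)}$ is strictly negative (for $\alpha > 0$, using that $e^{-\alpha g(x)} > 0$ always). Hence $\nabla^2 h(x) \preceq 0$ holds if and only if the matrix in parentheses is positive semidefinite, i.e. if and only if $\nabla^2 g(x) \succeq \alpha \nabla g(x)\nabla g(x)^\intercal$, which is exactly the claimed characterization; both directions follow at once from this biconditional. The only points requiring care are the sign bookkeeping when factoring out the negative scalar (multiplying a PSD constraint by a negative number flips it to the NSD constraint on $\nabla^2 h$) and invoking convexity of $\Delta_k'$ to justify the Hessian test for concavity. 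Neither is a genuine obstacle, so the main ``difficulty'' is simply carrying out the product-rule differentiation of the matrix-valued Hessian cleanly and tracking the signs.
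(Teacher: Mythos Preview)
Your proposal is correct and follows essentially the same approach as the paper's own proof: both compute the Hessian of the exponential of $-\alpha g$, obtain the factorization $\pm\alpha e^{-\alpha g(x)}\bigl(\nabla^2 g(x)-\alpha\nabla g(x)\nabla g(x)^\intercal\bigr)$, and then use positivity of the scalar prefactor to read off the equivalence. The only cosmetic difference is that the paper phrases it as ``$-e^{-\alpha g}$ is convex'' while you phrase it as ``$e^{-\alpha g}$ is concave,'' which are of course the same condition.
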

\begin{proof}
By definition, $g(x)$ is $\alpha$-exp-concave iff $-e^{-\alpha g(x)}$ is convex. The gradient of $-e^{-\alpha g(x)}$  is $\alpha \cdot e^{-\alpha g(x)} \nabla g(x)$ and its Hessian is
\begin{equation*}
\nabla^2(-e^{-\alpha g(x)}) = \nabla( \alpha \cdot e^{-\alpha g(x)} \nabla g(x) ) =  \alpha \cdot e^{-\alpha g(x)} \left( \nabla^2 g(x) - \alpha   \nabla g(x)  \nabla^\intercal g(x) \right).
\end{equation*}
Since a twice differentiable function on $\Delta_k'$ is convex iff its Hessian is PSD, and since $ \alpha \cdot e^{-\alpha g(x)} > 0$, we have our desired result.
\end{proof}
We are now ready to prove Lemma \ref{lemma:exp-concave}.
\begin{proofarg}{of Lemma \ref{lemma:exp-concave}}
For the proof, we start with a simple observation: let $x_1,...,x_n$ be vectors in $\mathbb{R}^d$, then:
\begin{equation} \label{eq:matrix-jensen-app} 
\left(\sumin x_i \right) \left(\sumin x_i \right)^\intercal  \preceq  n \sumin x_i x_i^\intercal,
\end{equation}
where $A \succeq B$ iff $A-B$ is PSD. To see this, we use the definition of positive semi-definiteness, $A \succeq B$ iff $u^\intercal (A-B) u\geq 0$ for all $u \in \mathbb{R}^d$. Using this for Eq. \ref{eq:matrix-jensen-app} and denoting $y_i := x_i^\intercal u \in \mathbb{R}$, we have:
\begin{equation*}
u^\intercal \left(n \sumin x_i x_i^\intercal -  \left(\sumin x_i \right) \left(\sumin x_i \right)^\intercal \right) u = n \sumin y_i^2 - \left(\sumin y_i \right)^2 \geq 0,
\end{equation*}
where the last inequality uses Jensen's inequality. Using this observation, we can now proceed,
\begin{align*}
\nabla f_t(w) \nabla f_t(w)^\intercal  &=  \left( \sumin \frac{\ell_t^2(i) p(i) }{(w^\intercal p(i))^2}  \right)  \left( \sumin \frac{\ell_t^2(i) p(i) }{(w^\intercal p(i))^2}  \right)^\intercal  \overset{(\ref{eq:matrix-jensen-app})}{ \preceq } n \sumin \frac{\ell_t^4(i) p(i) p(i)^\intercal }{(w^\intercal p(i))^4}  \\
&  \preceq  nL \sumin \frac{\ell_t^2(i) p(i) p(i)^\intercal }{(w^\intercal p(i))^4}  \preceq  \frac{n^2L}{2\gamma} \sumin \frac{2\ell_t^2(i) p(i) p(i)^\intercal }{(w^\intercal p(i))^3},
\end{align*}
where the last inequality uses the fact that $w^\intercal p(i) \geq \gamma / n$ since $w \in \Delta'_k$. However, on the RHS we can recognize the Hessian from Eq. \ref{eq:hessian}. Thus, identifying $\alpha = \frac{2\gamma}{n^2 L}$ in Lemma \ref{lemma:exp-concavity}, we finished the proof.
\end{proofarg}

\begin{proofarg}{of Theorem \ref{thm:main}}
From Lemmas \ref{lemma:b-bound} and \ref{lemma:a-bound} we have
\begin{equation}
\regret_T \leq \frac{10L k^{3/2} \log T}{\gamma^2} + \gamma LT.
\end{equation}
 We can optimize over $\gamma$ and set it to $\gamma = 3 k^{1/2} T^{-1/3}\log^{1/3}T$ in order to get the result.
\end{proofarg}

\end{section}

\begin{section}{Partial Information Setting Proofs}
Let us inspect the cost function estimate's properties in partial information setting:
\begin{align}
\tilde{f}_t(w) &= \frac{\tell_t^2(I_t)}{w^\intercal p(I_t)} \\
\nabla \tilde{f}_t(w) &=  -   \frac{\tell_t^2(I_t) p(I_t) }{(w^\intercal p(I_t))^2} \\ 
\nabla^2 \tilde{f}_t(w) &=  2  \frac{\tell_t^2(I_t) p(I_t) p(I_t)^\intercal }{(w^\intercal p(I_t))^3}.
\end{align} 
\begin{proofarg}{of Theorem \ref{thm:main-partial-info}}
\textbf{\\Pseudo-regret.} Under the partial information setting, the exp-concavity looks as follows:
\begin{equation*}
\nabla \tilde{f}_t(w) \nabla \tilde{f}_t(w)^\intercal = \frac{\tell_t^4(I_t)p(I_t)p(I_t)^\intercal}{(w^\intercal p(I_t))^4}  \preceq   \frac{n^2L}{2\gamma^2} \cdot \frac{2\tell_t^2(I_t)p(I_t)p(I_t)^\intercal}{(w^\intercal p(I_t))^3}.
\end{equation*}
where the last inequality we used that $w^\intercal p(I_t) \geq \gamma / n$ in $\Delta_k'$ and also that $\tell_t^2(I_t) = \ell^2_t(I_t) / (w_t^\intercal p(I_t) ) \leq nL /\gamma$. Note that the last term in the equation last is the Hessian, so $ \tilde{f}_t$ is $ \frac{2\gamma^2}{n^2L}$-exp-concave.  As for the gradient norm bound, we have,
\begin{equation*}
\norm{\nabla \tilde{f}_t(w)} = \norm { \frac{\tell_t^2(I_t) p(I_t) }{(w^\intercal p(I_T))^2} } \leq \frac{L n^3}{\gamma^3}\norm { p(I_t)} \leq  \frac{L n^2 c \sqrt{k}}{\gamma^3},
\end{equation*}
where the last inequality uses Assumption 1.
These results combined with the ONS regret bound in Equation \ref{eq:ons-bound} provide the following result on the regret in the restricted simplex,
\begin{equation} \label{eq:partial-info-aux-1}
 \frac{1}{n^2} \expval{ \sumtt \tilde{f}_t(w_t) -   \min_{w \in \Delta'_k}\sumtt  \tilde{f}_t(w) } \leq \frac{10 L k^{3/2} c \log T}{\gamma^3}.
\end{equation}
As for the cost of playing in the restricted simplex, it is easy to see that, analogously to the proof of Lemma \ref{lemma:b-bound}, 
\begin{equation}  \label{eq:partial-info-aux-2}
 \frac{1}{n^2}   \expval{ \min_{w \in \Delta'_k}\sumtt   \tilde{f}_t(w) - \min_{w \in \Delta_k}\sumtt   \tilde{f}_t(w)}  \leq \frac{\gamma}{n^2} \sumtt \sum_{i \in \mathbb{I}_+ } \frac{\expval{\tilde{\ell}_t^2(I_t)} }{w_*'^\intercal p(i)}  \leq \frac{\gamma L}{n^2} \sumtt \sum_{i \in \mathbb{I}_+ } \frac{1} {w_*'^\intercal p(i)}  \leq \gamma L T.
\end{equation}
Combining Equations \ref{eq:partial-info-aux-1} and \ref{eq:partial-info-aux-2}, we have a bound on the pseudo-regret
$$\frac{1}{n^2} \expval{ \sumtt \tilde{f}_t(w_t) -   \min_{w \in \Delta_k}\sumtt  \tilde{f}_t(w) } \leq \frac{10 L k^{3/2} c \log T}{\gamma^3} +  \gamma L T. $$

\textbf{\\Expected regret.}
Now we provide guarantees on the expected regret if the adversary is \emph{non-oblivious}, i.e., he can adapt the losses based on past choices of the player. 
We can decompose the regret as follows,
\begin{align} \label{eq:MasterNonOblivious} 
\expval{ \regret_T } 
&=  \ \frac{1}{n^2}\expval{   \sumtt \tf_t(w_t)    - \min_{w \in \Delta_k }  \sumtt f_t(w)}  \nonumber \\
& =\underset{\textup{Pseudo-regret}}{ \underbrace{ \frac{1}{n^2}\expval{   \sumtt \tf_t(w_t)  - \min_{w \in \Delta_k }  \sumtt \tf_t(w)}}}
+\frac{1}{n^2}\expval{ \underset{*}{ \underbrace{\min_{w \in \Delta_k }  \sumtt \tf_t(w)  - \min_{w \in \Delta_k }  \sumtt f_t(w)}}} ,
\end{align}
where the first term is the pseudo-regret we analyzed previously. For bounding $(*)$, we have:
\begin{align}
(*):& = \min_{w \in \Delta_k }  \sumtt \tf_t(w)  - \min_{w \in \Delta_k }  \sumtt f_t(w) \nonumber \\
& \leq  \min_{w \in \Delta_k' }  \sumtt \tf_t(w)  - \min_{w \in \Delta_k }  \sumtt f_t(w)  \nonumber \\
&= \min_{w \in \Delta_k' }  \sumtt \tf_t(w)  - \min_{w \in \Delta'_k }  \sumtt f_t(w) +    \underset{(**)}{ \underbrace{\min_{w \in \Delta'_k }  \sumtt f_t(w) - \min_{w \in \Delta_k }  \sumtt f_t(w)}}   \nonumber \\
\label{eq:star-bound}  & \leq \min_{w \in \Delta_k' }  \sumtt \tf_t(w)  - \min_{w \in \Delta'_k }  \sumtt f_t(w) +    \gamma n^2 LT ,
\end{align}
where in the second line we relied on the definition of the restricted simplex and in bounding $(**)$ we relied on Lemma \ref{lemma:b-bound}.
Denoting   $\tilde{w}_* = \arg \min_{w \in \Delta_k' }  \sumtt \tf_t(w)$ and  $w_* = \arg \min_{w \in \Delta_k' }  \sumtt f_t(w)$, we get a trivial bound on $(*)$ by 
\begin{equation} \label{eq:trivial-exp-regret-bound}
(*) \leq \sumtt \tf_t(\tilde{w}_* ) -  \sumtt f_t(w_*) +  \gamma n^2 LT \leq  \sumtt \tf_t(\tilde{w}_* )  +  \gamma n^2 LT \leq \frac{n^2 L T}{\gamma^2}+  \gamma n^2 LT .
\end{equation}

However, we can achieve tighter bound w.h.p., if we further bound the difference. Denote $\mathbb{I}_+= \{ i | i \in [n], \tell_{1:T}^2(i) -   \ell_{1:T}^2(i) \geq 0  \}$,  $\mathbb{I}_- = \{1,...,n\} \setminus \mathbb{I}_+$ and $\ell_{1:t}^2(i) =  \sum_{\tau=1}^{t} \ell_\tau^2(i)$. We now have:
\begin{align}
\min_{w \in \Delta_k' }  \sumtt \tf_t(w)  - \min_{w \in \Delta_k' }  \sumtt f_t(w) & =  \sumtt \tf_t(\tilde{w}_* ) -  \sumtt f_t(w_*)  \nonumber \\
& \leq  \sumtt \tf_t(w_* ) -  \sumtt f_t(w_*) \nonumber \\
& =  \sum_{i \in \mathbb{I}_+} \frac{( \tell_{1:T}^2(i) -  \ell_{1:T}^2(i)  )}{w_*^\intercal p(i)} +  \sum_{i \in \mathbb{I}_-} \frac{( \tell_{1:T}^2(i) -  \ell_{1:T}^2(i)  )}{w_*^\intercal p(i)}\nonumber \\
& \leq  \sum_{i \in \mathbb{I}_+} \frac{( \tell_{1:T}^2(i) -  \ell_{1:T}^2(i)  )}{w_*^\intercal p(i)} \nonumber \\
& \leq \frac{n}{\gamma} \left(\sum_{i \in \mathbb{I}_+}   \tell_{1:T}^2(i) -  \sum_{i \in \mathbb{I}_+}  \ell_{1:T}^2(i)  \right), \label{eq:l-bound}
\end{align}
where the second line  uses the definition of $\tilde{w}_*$,  the fourth line we discards the negative terms of the summation over $\mathbb{I}_-$    and the last inequality relies on the fact that $w_*^\intercal p(i) \geq \gamma / n$ for all $i \in [n]$ in the restricted simplex.  For brevity and w.l.o.g. assume that $ \mathbb{I}_+ = \{1,...,n\}$.
Define the following sequence $\{Z_{t}: =\sumin  \tell_t^2(i) - \sumin \ell_t^2(i) \}_{t\in[T]}$.   $\{ Z_{t}\}_{t\in[T]}$ is a martingale difference sequence with respect to the filtration $\{\F_{t}\}_{t\in[T]}$ associated with the history of the strategy, since  $\mathbb{E}[\sumin\tell_t^2(i) \vert w_t,\ell_t] = \sumin \ell_t^2(i)$. Due to the restricted simplex, we have $$|Z_{t}| \leq |\sumin  \tell_t^2(i) | +| \sumin  \ell_t^2(i) | = \left| \frac{\ell_t^2(I_t)}{w_t ^\intercal p(I_t)} \right| +| \sumin  \ell_t^2(i) |  \leq   \left| \frac{n \ell_t^2(I_t)}{\gamma} \right| +   nL  \leq \frac{2nL}{\gamma}.$$ The conditional variance of the  $Z_{t}$ can be bounded as follows,
 \begin{align}
 \Var(Z_{t}\vert \F_{t-1})
 & = 
\expval{ \left( \sumin \frac{\ell_t^2(i)}{w_t^\intercal p(i)}\mathbbm{1} _{I_t=i} - \sumin \ell_t^2(i)   \right)^2\vert \F_{t-1}}  \nonumber\\
 & = 
\expval{ \sumin \frac{\ell_t^4(i)}{(w_t^\intercal p(i))^2}\mathbbm{1} _{I_t=i} -2\left( \sumin \frac{\ell_t^2(i)}{w_t^\intercal p(i)}\mathbbm{1} _{I_t=i} \right)\cdot \left(\sumin \ell_t^2(i) \right) + \left( \sumin \ell_t^2(i)  \right)^2  \vert \F_{t-1}}  \nonumber\\
 &=
  \sumin \frac{\ell_t^4(i)}{w_t^\intercal p(i)} -  \left( \sumin \ell_t^2(i)  \right)^2 \nonumber\\
 &\leq
 L \sumin \frac{\ell_t^2(i)}{w_t^\intercal p(i)}  \label{eq:var-bound}.
 \end{align}
Having a bounded martingale difference sequence at hand, we can use Freedman's lemma.
 
 \begin{lemma}[Freedman's Inequality \citep{freedman1975tail, kakade2009generalization}]\label{lemma:freedman}
Suppose  $\{Z_t\}_{t\in[T]}$ is a martingale difference sequence with respect to a filtration $\{\F_t\}_{t\in[T]}$, such that $|Z_t|\leq b$. Define
$
\Var_t Z_t = \Var\left(Z_t \vert \F_{t-1}\right)
$
and  let $\sigma= \sqrt{\sum_{t=1}^T\Var_t Z_t} $ be the sum of conditional variances of $Z_t$'s. Then for any $\delta\leq 1/e$ and 
$T\geq 3$ we have,
$$
P\left( \sum_{t=1}^TZ_t \geq  \max\left\{2\sigma, 3b\sqrt{\log(1/\delta)} \right\}\sqrt{\log(1/\delta)} 
 \right) \leq  4\delta\log(T).
$$
\end{lemma}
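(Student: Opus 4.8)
Lemma~\ref{lemma:freedman} is a standard martingale deviation bound---essentially the self-normalized form of Freedman's inequality packaged in \citep{kakade2009generalization}---and I would derive it from the classical fixed-variance Freedman (equivalently, Bernstein-for-martingales) bound together with a peeling argument over the random quantity $\sigma^2=\sum_{t=1}^T\Var_tZ_t$. The first ingredient is the exponential supermartingale: for $|Z_t|\le b$ and $0<\lambda<3/b$ one has $\expval{e^{\lambda Z_t}\mid\F_{t-1}}\le\exp(\phi(\lambda b)\Var_tZ_t/b^2)$ with $\phi(x)=e^x-1-x$, so, writing $S_s=\sum_{t=1}^sZ_t$ and $V_s=\sum_{t=1}^s\Var_tZ_t$, the process $M_s:=\exp(\lambda S_s-\phi(\lambda b)V_s/b^2)$ is a nonnegative supermartingale with $M_0=1$. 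Applying Markov's inequality to $M_T$, optimizing over $\lambda$, and using $\phi(x)\le\tfrac{x^2/2}{1-x/3}$ yields, for any fixed $v,a>0$, the bound $P(S_T\ge a\ \text{and}\ V_T\le v)\le\exp(-a^2/(2v+2ba/3))$.

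To remove the need to know $\sigma^2$ in advance, I would use that $0\le\sigma^2\le Tb^2$ and fix a geometric grid $b^2=v_0<v_1<\dots<v_m$ with bounded ratio $v_j/v_{j-1}\le\rho$ and $v_m\ge Tb^2$, so $m=O(\log T)$, keeping also the stratum $\{\sigma^2<b^2\}$. On the event $\{\sigma^2\in[v_{j-1},v_j)\}$ the fixed-variance bound applies with $v=v_j\le\rho\,\sigma^2$ (and $v=\rho b^2$ on the small stratum); a union bound over the $O(\log T)$ strata, with the free scale in each stratum chosen so that its contribution is at most $\delta$, produces the overall failure probability $\le 4\delta\log T$, the constant $4$ and the hypotheses $\delta\le1/e$, $T\ge3$ absorbing the grid ratio and the small stratum.

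It then remains to check that $a=\max\{2\sigma,3b\sqrt{\log(1/\delta)}\}\sqrt{\log(1/\delta)}$ forces $a^2/(2v_j+2ba/3)\ge\log(1/\delta)$ on every stratum. Writing $L=\log(1/\delta)\ge1$: in the regime $2\sigma\ge3b\sqrt L$ we have $a=2\sigma\sqrt L$, the variance term $2v_j=O(\sigma^2)$ dominates the denominator (the cross term $ba$ is then $O(\sigma^2)$), and $a^2/O(\sigma^2)\ge L$; in the complementary regime $a=3bL$, the term $2ba/3$ dominates, and $a^2/(2v_j+2ba/3)=\Omega(a/b)=\Omega(L)$. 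The specific constants $2$, $3$, $4$ are exactly what make both inequalities close for the chosen grid ratio.

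The only genuine obstacle is this constant bookkeeping---matching the grid ratio $\rho$, the multipliers in the $\max$, and the Bernstein cross term $2ba/3$ so that the bound collapses to the stated two-regime form with exactly a $4\log T$ prefactor. Conceptually nothing is new, and one could alternatively just invoke \citep{freedman1975tail,kakade2009generalization} verbatim.
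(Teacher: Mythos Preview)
The paper does not prove this lemma at all: it is stated with attribution to \citep{freedman1975tail,kakade2009generalization} and then invoked as a black box inside the proof of Theorem~\ref{thm:main-partial-info}. So there is no ``paper's own proof'' to compare against; the authors treat it exactly as you suggest in your last sentence---a result to be cited verbatim.

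Your sketch is the standard derivation and is sound in outline: the exponential supermartingale $M_s=\exp(\lambda S_s-\phi(\lambda b)V_s/b^2)$ gives the fixed-variance Freedman/Bernstein tail, and a geometric peeling of $\sigma^2\in[0,Tb^2]$ into $O(\log T)$ strata, each receiving failure budget $\delta$, yields the self-normalized form with the $\log T$ prefactor. The two-regime analysis of the threshold $a=\max\{2\sigma,3b\sqrt{L}\}\sqrt{L}$ is also the right way to verify that the Bernstein exponent exceeds $L$ on every stratum. As you correctly flag, the only work left is the constant matching (grid ratio, the constants $2$, $3$, $4$, and the hypotheses $\delta\le 1/e$, $T\ge 3$); this is tedious but routine, and the precise packaging is exactly what \citet{kakade2009generalization} already carry out, so citing that reference---as the paper does---is entirely appropriate here.
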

 An immediate corollary of Freedman's lemma applied to our setting is that for all  $t \in [T]$ with probability $\geq 1 - 4T\delta \log(T)$ we have
 \begin{equation*}
\sumtt Z_{t}  \leq (2\sigma + 3b) \log (1 / \delta),
 \end{equation*}
 which is a result of the union bound. For simplicity, we ignore the $ \log (1 / \delta)$ factor as we choose $\delta = 1/\textup{poly}(T)$ and thus $\log (1 / \delta)$  has a logarithmic contribution to the regret. Using the definition of $\sigma$ and $b$ we have w.h.p.
 \begin{equation}
  \sumin \tell_{1:T}^2(i) - \sumin  \ell_{1:T}^2(i)   \leq 2 \sqrt{\sumtt  \sumin  L\frac{\ell_t^2(i)}{w_t^\intercal p(i)}} + \frac{6nL}{\gamma} \leq 2 \sqrt{\frac{n^2L^2T}{\gamma}} + \frac{6nL}{\gamma}.
 \end{equation}
Plugging this result into Equation \ref{eq:l-bound}, we get w.h.p.
\begin{equation} \label{eq:whp-bound}
\min_{w \in \Delta_k' }  \sumtt \tf_t(w)  - \min_{w \in \Delta_k' }  \sumtt f_t(w) \leq  \frac{n^2L}{\gamma} \left(\frac{2\sqrt{T}}{\sqrt{\gamma}} + \frac{6}{\gamma} \right).
\end{equation}

Since Equation \ref{eq:trivial-exp-regret-bound} provides a trivial bound  and Equation \ref{eq:whp-bound} gives a h.p. bound, we can choose $\delta = 1/\textup{poly}(T)$ small enough such that, combined with Equations \ref{eq:MasterNonOblivious}, \ref{eq:star-bound} and  \ref{eq:partial-info-aux-1}, we have almost surely:
\begin{equation}
\expval{\regret_T} = L \cdot  \tilde{\mathcal{O}} \left( \underset{\textup{ONS restricted simplex}}{\underbrace{\frac{k^{3/2}c }{\gamma^3}}} + \underset{\textup{mixing}}{\underbrace{\gamma T }}  +  \underset{\textup{non-oblivious}}{\underbrace{\frac{T^{1/2}}{\gamma^{3/2}} + \frac{1}{\gamma^2}}} \right)
\end{equation} 
We set $\gamma = k^{3/8} c^{1/5} T^{-1/5}$, use that $c \leq T$ and get  
\begin{equation}
\expval{\regret_T} = \tilde{\mathcal{O}} \left(  k^{3/8} c^{1/5}  L T^{4/5} \right).
\end{equation} 
\end{proofarg}
\end{section}

\begin{section}{Dataset Details}
\begin{itemize}
\setlength\itemsep{0.2em}
  \item \texttt{CSN} \citep{faulkner2011next} --- $n=80\,000$, $d = 17$; cellphone accelerometer data
 \item \texttt{KDD} \citep{kddcup2004} --- $n=145\,751$, $d = 74$; Protein Homology Prediction KDD competition  dataset
    \item \texttt{MNIST} \citep{lecun1998gradient} --- $n=70\,000$, $d = 10$; the original low resolution images of handwritten
    characters are transformed using PCA with whitening and 10 principal components are retained
\end{itemize}
\end{section}

\end{document}